\newtheorem{lemma}{Lemma}
\DeclareMathOperator*{\argmax}{arg\,max}
\begin{document}
%
\markboth{Submitted to IEEE Transactions on Communications}{}

\title{Teacher-Student Learning based Low Complexity Relay Selection in Wireless Powered Communications}

%
%
%

\author{ Aysun Gurur Onalan,~\IEEEmembership{Graduate Student Member,~IEEE}, Berkay~Kopru, Sinem~Coleri,~\IEEEmembership{Fellow,~IEEE} \thanks{A. G. Onalan and S. Coleri are with the Department
of Electrical and Electronics Engineering, and B.Kopru is with Department of Computer Engineering, Koc University, 34450, Istanbul, Turkey e-mail: \{aonalan17, bkopru17, scoleri\}@ku.edu.tr. \\ \indent Sinem Coleri acknowledges the support of the Scientific and Technological Research Council of Turkey 2247-A National Leaders Research Grant \#121C314.
}}

\maketitle

\begin{abstract}

Radio Frequency Energy Harvesting (RF-EH) networks are key enablers of massive Internet-of-things by providing controllable and long-distance energy transfer to energy-limited devices. Relays, helping either energy or information transfer, have been demonstrated to significantly improve the performance of these networks. This paper studies the joint relay selection, scheduling, and power control problem in multiple-source-multiple-relay RF-EH networks under nonlinear EH conditions. We first obtain the optimal solution to the scheduling and power control problem for the given relay selection. Then, the relay selection problem is formulated as a classification problem, for which two convolutional neural network (CNN) based architectures are proposed. While the first architecture employs conventional 2D convolution blocks and benefits from skip connections between layers; the second architecture replaces them with inception blocks, to decrease trainable parameter size without sacrificing accuracy for memory-constraint applications. To decrease the runtime complexity further, teacher-student learning is employed such that the teacher network is larger, and the student is a smaller size CNN-based architecture distilling the teacher's knowledge. A novel dichotomous search-based algorithm is employed to determine the best architecture for the student network.  Our simulation results demonstrate that the proposed solutions provide lower complexity than the state-of-art iterative approaches without compromising optimality.

\end{abstract}

\begin{IEEEkeywords}
Teacher-student learning, deep learning, convolutional neural network, relay selection, wireless powered communication, radio frequency energy harvesting
\end{IEEEkeywords}

\section{Introduction}

Energy Harvesting (EH) is a promising alternative to fixed power supplies (e.g., batteries) for energy-constrained Internet-of-Things (IoT) applications, such as smart cities and wearable devices \cite{Ma2020}, by enabling the net-zero energy objective of sixth generation (6G) mobile networks \cite{Viswanathan2020}.  The use cases and feasibility of wireless-powered IoT have been investigated in many standardization bodies, including IEEE 802.11 \cite{80211amp} and 3GPP\cite{3gpp}. Radio Frequency (RF) signals are preferred over other energy sources such as vibration and sun due to their independence from climate conditions, accessibility, and more predictable nature \cite{ashraf_simultaneous_2021}. RF-EH network design follows two main protocols: Simultaneous Information and Power Transfer (SWIPT), in which power and information are sent simultaneously, and Wireless Powered Communication Networks (WPCN), in which power and information are sent consecutively in downlink (DL) and uplink (UL) \cite{Lu2015}.  SWIPT and WPCN are studied  for various objective functions such as maximum throughput, minimum schedule length, and energy efficiency. Usage of relays has been demonstrated to improve network performance in relation to these objectives. Selecting the best relay for the information or energy transfer further contributes to improved performance. 



 The relay selection problem is studied in SWIPT networks with both amplify-and-forward \cite{gupta_time-switching_2019} and decode-and-forward relays \cite{zheng_swipt_2023}. \cite{gupta_time-switching_2019} jointly optimizes relay selection, subcarrier pairing, user allocation, and power allocation to improve the energy efficiency of multiple-source-multiple-relay multi-carrier networks. An iterative algorithm is proposed based on a tractable quasi-concave form by applying a series of convex transformations. \cite{zheng_swipt_2023} jointly optimizes relay selection, power allocation, beamforming, and signal splitting to maximize the end-to-end throughput of single-source-multiple-relay networks. Again, an iterative algorithm is proposed based on decoupling the non-convex variables into different sub-problems. As iterative algorithms suffer from high runtimes, sub-optimal relay selection schemes, based on channel characteristics of source-to-relay and/or relay-to-destination links, are also studied in both works.

The relay selection problem is investigated in half-duplex \cite{Onalan2020,xie_age_2023} and full-duplex \cite{kazmi2020, kazmi_relay_2021} WPCN for different objectives such as minimum length scheduling \cite{Onalan2020,kazmi2020}, maximum throughput \cite{kazmi_relay_2021}, age-of-information minimization \cite{xie_age_2023}.  \cite{Onalan2020, kazmi2020} and \cite{kazmi_relay_2021} formulate joint relay selection, scheduling, and power allocation problems in multiple-source-multiple-relay networks. They first solve scheduling and power control problem for fixed relay selection, and then, search for the optimum relay selection by using iterative algorithms. \cite{xie_age_2023} considers single-source-multiple-relay IoT system. It involves selecting a fixed number of relays chosen among all those that received the status update from the source at the earliest. Subsequently, it jointly optimizes the number of relays and the packet lengths at the source and relays by using a gradient-based two-step search algorithm. All proposed iterative algorithms that solve a complex mathematical equation in each iteration suffer from high run-time. However, algorithms with high runtime may not be practical for time-critical applications.

 Deep Neural Networks (DNNs) and Deep Reinforcement Learning (DRL) have recently been applied to relay selection problems in EH networks to provide low complexity, high accuracy, and robust solutions \cite{han_joint_2022,nguyen_deep-neural-network-based_2021}.  \cite{han_joint_2022} applies deep deterministic policy gradient (DDPG) and deep Q-network (DQN) algorithms for joint relay selection and power allocation problem with the objective of cumulative throughput maximization in  single-source-multiple-relay networks.  \cite{nguyen_deep-neural-network-based_2021} proposes DNN-based solution for the relay selection problem with the objective of maximum throughput in single-source-multiple-relay SWIPT networks. DNNs are used to predict the achievable throughput, and an iterative algorithm performs relay selection based on these predicted throughputs. These algorithms designed for single-source-multiple-relay networks are not directly applicable to the relay selection problem in multiple source-destination networks since individually optimum relays may not ensure the best relay selection for the entire network. 
 
 Deep learning techniques have been employed for relay selection in multi-source-multi-relay EH networks \cite{wang_multi-featured_2021,nguyen_short-packet_2023,Onalan23}. \cite{wang_multi-featured_2021} proposes a distributed actor-critic reinforcement learning algorithm running separately for each source. The purpose is to optimize relay selection and maximize the total data delivery ratio. This work assumes the arrival of energy packets as a random process and does not consider a dedicated RF energy source.  \cite{nguyen_short-packet_2023} formulates the joint relay selection and beamforming problem to maximize throughput.  Convolutional Neural Networks (CNNs) with multiscale-accumulation connections are proposed for throughput prediction. The relay selection is performed by iterative algorithms based on the predicted throughput. Although the deep learning approaches solving the sub-problems decrease complexity in \cite{ nguyen_short-packet_2023}, the iterative algorithms are still needed to choose relays. In \cite{Onalan23}, a feed-forward DNN is trained for the low-complexity solution to the classification problem in relay selection for minimum length scheduling under linear EH assumption.  This approach  has scalability issues for large network sizes due to the exponentially increasing number of classes with network size. In multi-source-multi-relay WPCN, the relay selection problem with the objective of the minimum length scheduling has not been studied under more realistic non-linear EH models. However, the development of low complexity and scalable solutions for minimum length scheduling under realistic conditions is crucial for time critical and delay sensitive IoT applications.


  Knowledge distillation (KD) techniques are proposed to compress large deep learning models \cite{Tang2021}  or integrate domain-specific knowledge to deep learning frameworks \cite{Zheng2022,Qi2023}. KD involves transferring knowledge from a complex network, referred to as teacher network, to a simpler network, known as the student network. The aim is to improve the performance of the student network, bringing it closer to the performance level of the complex network. Although KD is commonly applied in speech and image processing, its applications in wireless communications are currently limited. \cite{Tang2021} employs KD to provide a low-complexity CSI estimation in MIMO communication. \cite{Zheng2022} applies KD to harness network domain knowledge to improve the robustness of and trust on DRL. \cite{Qi2023} uses KD within federated learning to overcome the heterogeneity of local models to improve modulation classification tasks. None of these studies use KD to propose low-complexity wireless resource allocation and optimization algorithms.


The goal of this paper is to propose a deep learning based solution framework for the joint relay selection, scheduling, and power control problem in multiple-source-multiple-relay WPCN under non-linear EH conditions. Both sources and relays within the network have EH capabilities. By using the harvested energy, each source needs to send information via a selected relay or directly to the AP. Simultaneously, each relay is responsible for conveying all gathered information from sources to the AP. The relay selection problem is formulated as a classification problem, and we propose convolutional neural network (CNN) based architectures. To reduce complexity without compromising accuracy, we suggest using inception blocks instead of convolutional blocks and knowledge distillation techniques. The major contributions of the paper are listed as follows:
\begin{itemize}
\item We formulate the joint relay selection, scheduling, and power control problem in multiple-source-multiple-relay WPCN under non-linear EH conditions, for the first time in the literature. We adapt the formulation based on the linear EH model in \cite{Onalan2020} to non-linear EH conditions. The non-linearity in the energy conversion model changes the required EH duration for the demanded IT, consequently, impacts the optimal solution for all variables, including relay selection, time, and power allocations.   The formulated problem is a non-convex mixed integer non-linear problem (MINLP) and NP-hard. 

\item We propose an optimization theory based solution framework for the joint relay selection, scheduling and power control problem in multiple-source-multiple-relay WPCN under non-linear EH conditions, for the first time in the literature. This framework serves as a benchmark in the performance comparison, and it provides training data for the proposed CNN-based solution strategy. We follow the solution strategy proposed in \cite{Onalan2020}, which starts from a simpler sub-problem of scheduling and power control for a given relay selection, and then searches for the optimal relay selection via branch-and-bound based algorithm. We adapt this algorithm to non-linear EH condition. 

\item We propose a deep learning based solution framework for the relay selection for minimum length scheduling in multiple-source-multiple-relay WPCN, for the first time in the literature. We define the relay selection problem as a classification problem and propose two novel Convolutional Neural Network (CNN) based architectures for low complexity solutions. The inputs of both architectures are channel gains. The outputs are relay selection variables represented by different classes. The first architecture uses convolutional blocks and benefits from skip connections between layers. The second architecture replaces several convolutional blocks with inception blocks to lower the trainable parameter size without sacrificing accuracy for memory-constrained applications. 



\item We propose the usage of KD techniques, specifically teacher-student learning, to decrease the complexity of CNN based solution for the relay selection problem, for the first time in the literature. 
We distill the knowledge of the initial well-established CNN model (teacher network), to a smaller and less complex architecture (student network), by including the soft outputs of the teacher in the training process of the student.

\item To maximize the advantages of teacher-student learning, we propose a novel dichotomous search-based algorithm to determine the optimal student network architecture, achieving a specified level of accuracy with the lowest possible complexity, for the first time in the literature. This search adaptively selects the number of trainable parameters, indicating the complexity of the architecture, of the CNN-based model by evaluating the trade-off between complexity and validation loss in each iteration. 


\item  We illustrate the superior performance of the proposed deep learning frameworks compared to the iterative benchmark algorithms in terms of schedule length and complexity via extensive simulations.

\end{itemize}

The rest of the paper is organized as follows. Section~\ref{section:sysmod} describes the system model. In Section~\ref{section:probForm}, the optimization problem is formulated, and the solution strategy is discussed. Section~\ref{sec:schedule} formulates and solves the scheduling and power control problem for a given relay selection. For the relay selection problem, Section~\ref{sec:relay-learning} presents the proposed CNN-based models and teacher-student learning approach. Section~\ref{section:simRes} presents the simulation results. Finally, Section~\ref{section:conc} concludes the paper.

\section{System Model} \label{section:sysmod}

We consider multiple-source-multiple-relay half-duplex WPCN. Sources send information to the AP in the uplink by exhausting the energy harvested in the downlink, either independently or with the help of a relay. Both sources and relays lack embedded energy supplies, requiring them to replenish energy with the RF signal broadcasted by the AP for IT. The WPCN contains one AP, $N$ EH sources, denoted by $S_i$, $i = 1,2,\ldots, N$, and $K$ EH decode-and-forward (DF) relays, denoted by $R_j$, $j=1,2,\ldots K$.


\begin{figure*}[!htb]
    \centering
    \includegraphics[width=0.7\textwidth]{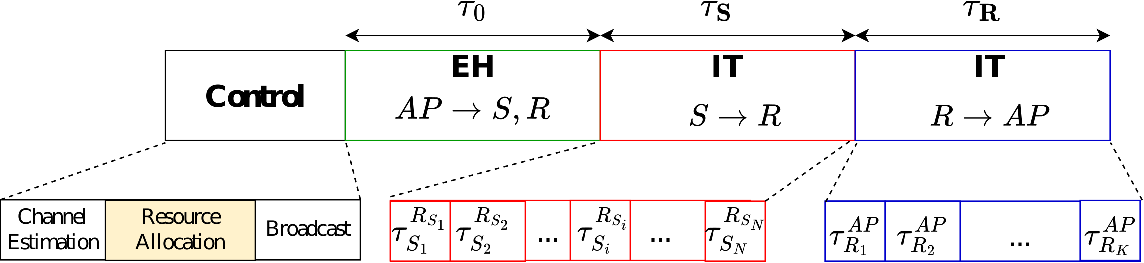}
    \caption{Time Frame}
    \label{fig:timeframe}
\end{figure*}


\subsection{Channel Model}

We consider block-fading channels between $X$ and $Y$ where $X,Y \in \{AP, S_i, R_j\}$, $i = 1,2,\ldots, N$ and $j=1,2,\ldots K$. Channel gains remain constant within each time block but vary independently among different blocks. The DL and UL channel gains between $X$ and $Y$ are denoted by $h_{X}^Y $ and  $g_{X}^Y $, respectively. For instance, $h_{AP}^{S_i}$ is the DL channel gain between $AP$ and $S_i$.  We assume that the AP perfectly knows all the channel gains, similar to the previous works, { e.g., \cite{kazmi2020,kazmi_relay_2021,Onalan2020,xie_age_2023, nguyen_short-packet_2023, Onalan23}. 

 \subsection{Tranmission Model}

 TDMA protocol is used as a medium access control protocol. Fig.~\ref{fig:timeframe} depicts the time division structure. The time block is divided into four time slots, allocated for control, EH, IT of sources, and IT of relays. In a time block, each source $S_i$ needs to send $D_{S_i}>0$ amount of data to the AP  for $i = 1,2,\ldots, N$. 
 
 In the control slot, the sources estimate the channels via pilot signals by the AP and feed the channel gain information back to the AP. Compared to the time and energy cost of IT, the time and energy spent for channel estimation can be considered negligible for a low mobility network \cite{Li2019}. Then, based on the channel gains, the AP runs the proposed algorithms to optimize the resource allocation. As resource allocation problems usually have high complexity, e.g. NP-Hard, the time spent on resource allocation may not be negligible. To complete operations in a time-block within channel coherence time,  low-complexity resource allocation algorithms are required. Finally, the AP broadcasts the information on allocated resources to all the relays and sources in a negligible amount of time.

 In EH slot, the relays and sources harvest energy from the AP during $\tau_0$ amount of time.  The sources and relays possess rechargeable batteries with low energy storage and high self discharge rate, e.g. super-capacitors \cite{Kaus2010}.   Therefore, the harvested energy can be used within the time block but cannot be stored for further use. 



 In the first IT slot,  each source communicates directly to the AP or a selected relay with the harvested energy. Then, in the second IT slot, the selected relays convey the gathered information from sources to the AP.
IT time slots are divided into sub-slots corresponding to each source and relay.  Assume that  the $j^{th}$ relay is selected by the $i^{th}$ source for cooperation. Then, a sub-slot is allocated for the IT from source $S_i$ to relay $R_j$ with duration $\tau_{S_i}^{R_j}$ and another sub-slot is allocated for the IT from $R_j$ to $AP$ with duration $\tau_{R_j}^{AP}$ for $i = 1,2,\ldots, N$ and $j=0,1,2,\ldots K$, where $R_0$ refers to $AP$.  If the $i^{th}$ source prefers direct communication with the AP, information transfer is completed in the first IT block in $\tau_{S_i}^{R_0}$ amount of time and $\tau_{R_0}^{AP}$ is obviously zero.  

Note that  each source can use a single relay for IT. However, the relays can assist multiple sources and  must convey all the information they receive from the sources to the AP.

\subsection{Downlink Energy Harvesting}
The AP is assumed to have an unlimited power source. The transmission power of the AP is assumed to be constant and denoted by $P_{A}$. We assume that $P_{A}$  is large enough to ignore AWGN noise component during the EH phase. 

The relays and sources harvest energy from the AP during $\tau_0$ amount of time.  We apply a practical non-linear EH model \cite{Boshkovska2015}. Thus, the energy harvested at the $i^{th}$ source and $j^{th}$ relay is given by
\begin{equation}
	E_{X} = \frac{ \Psi_{X} - M_{X}\Omega_{X}}{1-\Omega_{X}}\tau_0,
\label{eq:EH_source}
\end{equation}
 where 
\begin{align}
    \Omega_{X} &= \frac{1}{1+exp(a_{X}b_{X})}, \\
    \Psi_{X} &= \frac{M_{X}}{1+\exp(-a_{X}(h_{AP}^{X}P_A-b_{X}))} ,
\end{align}
$a_{X}, b_{X}$, and $M_{X}$ are constant parameters characterizing non-linear behaviour of EH circuit, and $X \in \{S_i, R_j\}$, $i = 1,2,\ldots, N$ and $j=1,2,\ldots K$.


\subsection{Uplink Information Transfer}
We consider continuous power model, where the transmit power $P_{S_i}^{R_j}$ and $P_{R_j}^{AP}$ of $S_i$ and $R_j$, respectively, takes any value below a maximum level $P^{max}$\cite{Onalan2020,kazmi_relay_2021}. $P_{S_i}^{R_j}$ and $P_{R_j}^{AP}$ are also limited by the amount of the harvested energy given by Eq.~(\ref{eq:EH_source}). The required energy for decoding the received message at relays is negligible compared to the energy consumption for IT. Thus, {$P_{S_i}^{R_j} \leq {E_{S_i}}/{\tau_{S_i}^{R_j}} \text{ and } P_{R_j}^{AP} \leq {E_{R_j}}/{\tau_{R_j}^{AP}}$}.


The maximum achievable rate during IT is formulated based on continuous transmission rate model and Shannon's channel capacity formula for AWGN channels. There is no interference during IT  as all sources and relays use separate time slots.  Then, the instantaneous UL transmission rates $T_{S_i}^{R_j}$ from ${S_i}$ to ${R_j}$  and $T_{R_j}^{AP}$ from ${R_j}$ to $AP$ are given by


\begin{align}
    T_{S_i}^{R_j} &= W \log_2 \left( 1 + \frac{P_{S_i}^{R_j}g_{S_i}^{R_j}}{WN_0} \right) \\
    T_{R_j}^{AP} &= W \log_2 \left( 1 +  \frac{P_{R_j}^{AP}g_{R_j}^{AP}}{WN_0} \right).
\label{eq:rateS}
\end{align}

\section{Problem Formulation and Solution Strategy} \label{section:probForm}

The relay selection, scheduling, and power control problem with the objective of minimizing schedule length subject to traffic demand, energy, and power constraints is formulated, similar to the optimization problem with linear EH model in \cite{Onalan2020}, as follows:
\begin{subequations} \label{eq:relSel}
\begin{align}
\min \qquad &  \tau_0 + \sum_{i=1}^N \sum_{j=0}^K \tau_{S_i}^{R_j} + \sum_{j=1}^K \tau_{R_j}^{AP} \label{eq:relSel:obj} \\
\text{s.t.}\qquad & \sum_{j=0}^K b_i^j = 1, \qquad i=1,\ldots,N, \label{eq:relSel:relSel}\\
									& P_{S_i}^{R_j}  \leq P^{max} b_i^j, \quad i=1,\ldots,N, \; j=0,\ldots,K, \label{eq:relSel:maxpow1} \\
									& P_{R_j}^{AP}  \leq  \min \left\{P^{max},P^{max} \sum_{i=1}^N  b_i^j \right\}, \nonumber \\ 
									&\hspace{4.4cm} j=0,\ldots,K, \label{eq:relSel:maxpow2} \\
									& P_{S_i}^{R_j} \tau_{S_i}^{R_j} \leq \frac{ \Psi_{S_i} - M_{S_i}\Omega_{S_i}}{1-\Omega_{S_i}}\tau_0 , \qquad i=1,\ldots,N, \nonumber \\ 
									& \hspace{4.4cm} j=0,\ldots,K, \label{eq:relSel:energy1} \\
									& P_{R_j}^{AP} \tau_{R_j}^{AP} \leq \frac{ \Psi_{R_j} - M_{R_j}\Omega_{R_j}}{1-\Omega_{R_j}}\tau_0 , \qquad j=1,\ldots,K, \label{eq:relSel:energy2}\\
									& \tau_{S_i}^{R_j} W \log_2 \left( 1 + \frac{P_{S_i}^{R_j}g_{S_i}^{R_j}}{WN_0} \right)\geq D_{S_i}b_i^j, \quad \nonumber \\ & \hspace{2.4cm} i=1,\ldots,N, \; j=0,\ldots,K,  \label{eq:relSel:demand1}\\
									& \tau_{R_j}^{AP} W \log_2 \left( 1 + \frac{P_{R_j}^{AP}g_{R_j}^{AP}}{WN_0} \right) \geq \sum_{i=1}^N  D_{S_i}b_i^j, \nonumber \\ & \hspace{4.4cm} j=0,\ldots,K, \label{eq:relSel:demand2}\\							
									&  \tau_0,\tau_{R_j}^{AP}, P_{R_j}^{AP},\tau_{S_i}^{R_j}, P_{S_i}^{R_j} \geq 0, \quad i=1,\ldots,N, \; \nonumber \\ & \hspace{4.6cm} j=1,\ldots,K, \label{eq:relSel:nonneg}\\
									& b_i^j \in \{0,1\},  \quad i=1,\ldots,N, \; j=1,\ldots,K.		\label{eq:relSel:integrality}	
\end{align}
\end{subequations}

The variables of the optimization problem are $\tau_0$, EH duration; $\tau_{S_i}^{R_j}$, IT duration from the $i^{th}$ source to the $j^{th}$ relay;  $\tau_{R_j}^{AP}$, IT duration from the $j^{th}$ relay to the AP; $P_{S_i}^{R_j}$, transmit power of  the $i^{th}$ source when it transmits to the $j^{th}$ relay; $P_{R_j}^{AP}$, transmit power of the $j^{th}$ relay when it transmits to the AP; and $b_i^j$, the relay selection parameter that takes value 1 if the $j^{th}$ relay is selected for the $i^{th}$ source and 0 otherwise, for $i= 1,\ldots,N$ and $j=0,\ldots,K$. 

The objective of the optimization problem is to minimize the total duration for EH and IT. Eq.~(\ref{eq:relSel:relSel}) ensures that one and only one relay is selected for each source, allowing for the possibility of a relay being selected for multiple sources. The maximum allowable transmit power of  the sources and relays are set by Eqs.~(\ref{eq:relSel:maxpow1}) and (\ref{eq:relSel:maxpow2}), respectively.  Eq.~(\ref{eq:relSel:maxpow1}) forces $P_{S_i}^{R_j}$  to be zero when $R_j$ is not selected for $S_i$. Similarly, Eq.~(\ref{eq:relSel:maxpow2}) sets  $P_{R_j}^{AP}$  to zero when $R_j$ is not selected for any source. Eqs.~(\ref{eq:relSel:energy1}) and (\ref{eq:relSel:energy2}) are energy causality constraints limiting $P_{S_i}^{R_j}$ and $P_{R_j}^{AP}$  by the harvested energy amount. Traffic demand constraint  Eq.~(\ref{eq:relSel:demand1}) requires that each source $S_i$ conveys a $D_{S_i}$ amount  of data to the AP or the selected relay. The demand constraint  Eq.~(\ref{eq:relSel:demand2}) guarantees that relays  convey all data they gather from sources to the AP. Lastly, Eq.~(\ref{eq:relSel:nonneg}) and (\ref{eq:relSel:integrality}) represent non-negativity and integrality constraints, respectively.

Problem~\ref{eq:relSel} is a non-convex MINLP as all equations except Eqs.~(\ref{eq:relSel:relSel})-(\ref{eq:relSel:maxpow2}) are non-linear and Eqs. (\ref{eq:relSel:energy1})-(\ref{eq:relSel:demand2}) are non-convex.  The problem is NP-hard as proven in \cite{Onalan2020}.

We follow the solution strategy in \cite{Onalan2020}. The reduced form of Problem~(\ref{eq:relSel}) is formulated for a given relay selection. When the relay selection is predetermined, the total schedule length is minimized while satisfying constraints related to energy causality, demand requirements, and maximum transmit power. The selected relays act as sources since they must convey the gathered messages from sources to the AP. Simultaneously, these chosen relays determine the destination for each source. If no relay is selected for a source, AP is the destination. Thus, a scheduling and power control problem is formulated for WPCN with multiple source-destination pairs and an AP as an energy source. The solution to the reduced problem has been well-studied for the linear EH model. An iterative optimal algorithm has been proposed based on bi-level transformation \cite{Onalan2020}, which is named \textit{POWer constrained Multiple User time minimization Algorithm} (POWMU).  Non-linearity in EH affects the required EH length to satisfy the data demand from sources, thereby altering the optimal solution to the scheduling and power control problem. We modify the POWMU algorithm to accommodate the non-linear EH model considered in this work in Section~\ref{sec:schedule}. 

After calculating the minimum schedule length corresponding to a relay selection, the best relay selection can be obtained by searching over all source-relay combinations.  To find the optimum relay selection,  \cite{Onalan2020} suggests a branch-and-bound algorithm (BBA), which we also adapt to the non-linear EH model to use as a benchmark in the performance comparison and training data in the proposed CNN based solution strategy. BBA branches on relay selection variables and solves the scheduling and power control problem when the selection is completed. To prevent the exhaustive search on all possible combinations, the algorithm uses bounding techniques to prune the BB-tree nodes that cannot provide better feasible points than the ones obtained at the previous nodes. For pruning, we develop lower and upper-bound generation techniques specific to our problem. However, at the worst case, BBA has exponential complexity. Although  two BB-based heuristic approaches with lower runtimes have been proposed in \cite{Onalan2020}, these heuristics still iteratively solve complex mathematical equations and suffer from complexity. To provide a low-complexity solution to the relay selection problem, we propose novel CNN architectures and KD techniques in Section~\ref{sec:relay-learning}.


\section{Scheduling and Power Control under Non-Linear EH} \label{sec:schedule}

This section presents the reduced form of Problem~(\ref{eq:relSel}) for the given relay selection. The selected relays are considered sources since they must send information to the AP by exhausting the harvested energy. Let $L$ be the number of selected relays. Then, the resulting number of sources becomes $N'=N+L$. The selected relays are also the destinations of the sources. Accordingly, the scheduling and power control problem in multiple-source-multiple-destination WPCN under non-linear EH conditions is formulated as follows:
\begin{subequations} \label{eq:problm1APNS}
\begin{align}
\min \qquad &  \tau_0 + \sum_{i=1}^{N'} \tau_{S_i} \label{eq:shceduling_obj}\\
\text{s.t.}\qquad & P_{S_i} \tau_{S_i} \leq \frac{ \Psi_{S_i} - M_{S_i}\Omega_{S_i}}{1-\Omega_{S_i}}\tau_0, \; \quad  i=1,2,\ldots,N', \label{eq:energy1APNS}\\
									& \tau_{S_i} W \log_2 \left( 1 + \frac{P_{S_i}g_{S_i}}{WN_0} \right) \geq D_{S_i} , \; i=1,2,\ldots,N', \label{eq:data1APNS}\\
									& P_{S_i} \leq P^{max}, \qquad \quad  i=1,2,\ldots,N', \label{eq:pmax1APNS}  \\
									&\tau_0, \tau_{S_i}, P_{S_i}  \geq 0, \qquad i=1,2,\ldots,N', \label{eq:nonneg1APNS}
\end{align}
\end{subequations}
where $g_{S_i}$ is the UL channel gain between $S_i$ and its destination; $P_{S_i}$ is the transmit power of $S_i$; and $\tau_{S_i}$ is the IT time from $S_i$ to its destination for $i=1,\ldots, N'$.  The variables of Problem~(\ref{eq:problm1APNS}) are $P_{S_i}, \tau_{S_i}$ and $\tau_0$. Problem~(\ref{eq:problm1APNS}) aims to minimize the total duration for EH and IT. Eqs.~(\ref{eq:energy1APNS}), (\ref{eq:data1APNS}), (\ref{eq:pmax1APNS}), and (\ref{eq:nonneg1APNS}) represent the constraints for energy causality, demand requirement, maximum transmit power, and non-negativity, respectively.

The objective function given in Eq.(\ref{eq:shceduling_obj}) is convex over $\tau_0$ \cite{Onalan2020}. Thus, the optimal $\tau_0$  can be searched via bisection search whose iterations solve $N'$ subproblems calculating the minimum IT length of each source $S_i$ for the given EH length $\tau_0$.  The sub-problems are formulated for $N' = 1$ and fixed $\tau_0 = \overline{\tau_0} $, as follows: 
\begin{subequations} \label{eq:subproblm1APNS}
\begin{align}
\min \qquad &    \tau_{S_1} \label{eq:subprob_obj} \\
\text{s.t.}\qquad & P_{S_1}\tau_{S_1} \leq \frac{ \Psi_{S_1} - M_{S_1}\Omega_{S_1}}{1-\Omega_{S_1}}\overline{\tau_0}, \label{eq:subenergy1APNS}\\
									& \tau_{S_1} W \log_2 \left( 1 + \frac{P_{S_1}g_{S_1}}{WN_0} \right) \geq D_{S_1}, \label{eq:subdata1APNS}\\
									& P_{S_1} \leq P^{max}, \label{eq:subpmax1APNS}  \\
									& \tau_{S_1}, P_{S_1}   \geq 0, \label{eq:subnonneg1APNS}
\end{align}
\end{subequations}
 
 The objective of the subproblems is minimizing $\tau_{S_1}$ for the fixed ${\tau_0}=\overline{\tau_0}$ where Eqs.~(\ref{eq:subenergy1APNS}), (\ref{eq:subdata1APNS}), and (\ref{eq:subpmax1APNS})  represent energy causality, data causality and maximum allowed transmit power constraints, respectively. Problem~(\ref{eq:subproblm1APNS}) is non-convex due to the non-convexity of Eqs.~(\ref{eq:subenergy1APNS}) and (\ref{eq:subdata1APNS}). Next, we provide the solution to Problem~(\ref{eq:subproblm1APNS}).

 \begin{lemma} \label{lemma:solveSubProb} Let $\overline{\tau_{S_1}}(\overline{\tau_0})$ be the optimal solution to Problem~(\ref{eq:subproblm1APNS}) for $\tau_0 = \overline{\tau_0} $.  The optimal IT duration $\overline{\tau_{S_1}}$  is expressed by
\begin{equation}
   \overline{\tau_{S_1}}(\overline{\tau_0}) =  \frac{D_{S_1}}{W\log_2\left( 1+\frac{P^{max}g_{S_1}}{WN_0}\right)} \label{eq:ITwoR_pmax} 
\end{equation}
if
\begin{equation}
 \overline{\tau_0} \geq \ddot{\tau}_0 = \frac{P^{max}\overline{\tau_{S_1}(\overline{\tau_0})}(1-\Omega_{S_1})}{\Psi_{S_1} - M_{S_1}\Omega_{S_1}}.
 \label{eq:EHwoR_pmax}   
\end{equation}
Otherwise, $\overline{\tau_{S_1}}(\overline{\tau_0})$ is derived as a solution of the nonlinear equation
\begin{equation}
\overline{\tau_0}  = \frac{\overline{\tau_{S_1}}(\overline{\tau_0})}{\gamma_{S_1}}\left(2^{\frac{D_{S_1}}{W \overline{\tau_{S_1}}(\overline{\tau_0})}}-1\right)
\label{eq:f_func2},
\end{equation}
where 
\begin{equation}
    \gamma_{S_1} =  \frac{g_{S_1}(\Psi_{S_1} - M_{S_1}\Omega_{S_1})}{WN_0(1-\Omega_{S_1})}.
    \label{eq:gamma}
\end{equation}
\label{lemma:ITforgivenEH}
\end{lemma}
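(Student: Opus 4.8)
The plan is to eliminate the power variable $P_{S_1}$ and reduce Problem~\eqref{eq:subproblm1APNS} to a one-dimensional feasibility question in $\tau_{S_1}$ alone. Since $P_{S_1}$ does not appear in the objective~\eqref{eq:subprob_obj}, for any fixed $\tau_{S_1}$ a feasible $P_{S_1}$ exists if and only if the \emph{smallest} power that meets the data demand~\eqref{eq:subdata1APNS} still respects both caps~\eqref{eq:subpmax1APNS} and~\eqref{eq:subenergy1APNS}. Solving~\eqref{eq:subdata1APNS} at equality gives this minimal power $P_{\min}(\tau_{S_1}) = \tfrac{WN_0}{g_{S_1}}\bigl(2^{D_{S_1}/(W\tau_{S_1})}-1\bigr)$, so feasibility at a given $\tau_{S_1}$ is equivalent to the two inequalities $P_{\min}(\tau_{S_1}) \le P^{max}$ and $\tau_{S_1}P_{\min}(\tau_{S_1}) \le E_{S_1}$, where $E_{S_1} = \tfrac{\Psi_{S_1}-M_{S_1}\Omega_{S_1}}{1-\Omega_{S_1}}\overline{\tau_0}$ denotes the harvested energy.

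The key monotonicity facts I would establish next are that $P_{\min}(\tau_{S_1})$ is strictly decreasing in $\tau_{S_1}$, and that the consumed energy $\mathcal{E}(\tau_{S_1}) := \tau_{S_1}P_{\min}(\tau_{S_1}) = \tfrac{N_0 D_{S_1}}{g_{S_1}}\cdot\tfrac{2^{u}-1}{u}$, with $u = D_{S_1}/(W\tau_{S_1})$, is also strictly decreasing in $\tau_{S_1}$ (equivalently, $u\mapsto(2^u-1)/u$ is increasing). Each of the two feasibility inequalities then translates into a lower bound on $\tau_{S_1}$: the power cap forces $\tau_{S_1}\ge\tau_A$, where $P_{\min}(\tau_A)=P^{max}$ yields exactly the closed form~\eqref{eq:ITwoR_pmax}; the energy cap forces $\tau_{S_1}\ge\tau_B$, where $\mathcal{E}(\tau_B)=E_{S_1}$, and substituting $E_{S_1}$ and regrouping the constants into $\gamma_{S_1}$ from~\eqref{eq:gamma} reproduces the nonlinear equation~\eqref{eq:f_func2}. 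Assuming the subproblem is feasible (a finite $\tau_B$ exists), the feasible set is therefore $\{\tau_{S_1}\ge\max(\tau_A,\tau_B)\}$ and the optimum is $\overline{\tau_{S_1}}(\overline{\tau_0})=\max(\tau_A,\tau_B)$.

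It then remains to show that the case split stated in the lemma is exactly the comparison $\tau_A\gtrless\tau_B$. Because $\mathcal{E}$ is decreasing, $\tau_A\ge\tau_B$ holds iff $\mathcal{E}(\tau_A)\le\mathcal{E}(\tau_B)=E_{S_1}$; and since $P_{\min}(\tau_A)=P^{max}$ we have $\mathcal{E}(\tau_A)=P^{max}\tau_A$, so the condition becomes $P^{max}\tau_A\le E_{S_1}$. Writing out $E_{S_1}$ and isolating $\overline{\tau_0}$ gives precisely the threshold $\ddot{\tau}_0$ of~\eqref{eq:EHwoR_pmax}; the apparent self-reference is harmless, since in this regime $\overline{\tau_{S_1}}=\tau_A$ equals the constant~\eqref{eq:ITwoR_pmax}. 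Hence $\overline{\tau_0}\ge\ddot{\tau}_0$ selects the max-power regime $\overline{\tau_{S_1}}=\tau_A$, and $\overline{\tau_0}<\ddot{\tau}_0$ selects the energy-limited regime where $\overline{\tau_{S_1}}=\tau_B$ solves~\eqref{eq:f_func2}.

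I expect the main obstacle to be bookkeeping rather than conceptual: correctly converting each power constraint into the right inequality direction for $\tau_{S_1}$ through the two monotonicity claims, and matching $\mathcal{E}(\tau_B)=E_{S_1}$ and $P^{max}\tau_A\le E_{S_1}$ to the exact algebraic forms~\eqref{eq:f_func2} and~\eqref{eq:EHwoR_pmax}. The only genuinely technical step is the monotonicity of $(2^u-1)/u$, which reduces to showing that $g(u):=2^u(u\ln 2 - 1)+1>0$ for $u>0$; this holds because $g(0)=0$ and $g'(u)=u(\ln 2)^2\,2^u>0$, and $\tfrac{d}{du}\!\left[(2^u-1)/u\right]=g(u)/u^2$.
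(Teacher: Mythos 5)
Your proposal is correct, and every step checks out: the reduction to a one-variable feasibility problem in $\tau_{S_1}$, the closed form $P_{\min}(\tau_{S_1}) = \tfrac{WN_0}{g_{S_1}}\bigl(2^{D_{S_1}/(W\tau_{S_1})}-1\bigr)$, the monotonicity of $(2^u-1)/u$ (your computation $g(0)=0$, $g'(u)=u(\ln 2)^2 2^u>0$ is right), the identification of $\tau_A$ with \eqref{eq:ITwoR_pmax} and $\tau_B$ with \eqref{eq:f_func2}, and the verification that $\tau_A \gtrless \tau_B$ is exactly the threshold condition \eqref{eq:EHwoR_pmax}. The comparison with the paper is necessarily one-sided, because the paper does not give a self-contained proof at all: it states only that ``the proof follows the proof of Lemma 3 in \cite{Onalan2020}'' with Eqs.~\eqref{eq:EHwoR_pmax} and \eqref{eq:gamma} reformulated for the nonlinear EH model, i.e., the harvested-energy expression $\tfrac{\Psi_{S_1}-M_{S_1}\Omega_{S_1}}{1-\Omega_{S_1}}\overline{\tau_0}$ replaces the linear one. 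The underlying mathematics in that cited proof is the same case analysis you carried out (the source transmits at the smaller of $P^{max}$ and the energy-limited power, and whichever cap binds determines the regime), so your route is not conceptually different; what your write-up buys is completeness: it makes explicit the two monotonicity facts that justify turning both power caps into \emph{lower} bounds on $\tau_{S_1}$, it explains why the apparent self-reference in $\ddot{\tau}_0$ is harmless, and it flags the implicit feasibility assumption (a finite $\tau_B$ exists only when $\overline{\tau_0} > D_{S_1}\ln 2/(W\gamma_{S_1})$, since $\mathcal{E}(\tau)$ decreases to the positive limit $N_0 D_{S_1}\ln 2/g_{S_1}$), a point the lemma glosses over by presupposing an optimal solution exists.
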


\begin{proof}
    The proof follows the proof of Lemma 3 in \cite{Onalan2020}. Different from \cite{Onalan2020}, Eqs.~(\ref{eq:EHwoR_pmax}) and (\ref{eq:gamma}) are formulated considering non-linear EH conditions.
\end{proof}

\begin{lemma} \label{lemama:bounds}
    The upper and lower bounds for the optimum $\tau_0$ of Problem~(\ref{eq:problm1APNS}) is expressed, respectively, by  
    
    \begin{equation}
        {\tau}_0^{ub} = \max_{i \in \{1,\ldots,N'\}}\frac{P^{max}\overline{\tau_{S_i}}(1-\Omega_{S_i})}{\Psi_{S_i} - M_{S_i}\Omega_{S_i}}
    \end{equation}
    
    and 
    \begin{equation}
        {\tau}_0^{lb}= \max_{i \in \{1,\ldots,N'\}} \frac{D_{S_i}\ln(2)}{W\alpha_{S_i} \gamma_{S_i}} \left(2^{\alpha_{S_i}/\ln(2)}-1\right)
    \end{equation}
    where $\gamma_{S_i} =  \frac{g_{S_i}(\Psi_{S_i} - M_{S_i}\Omega_{S_i})}{WN_0(1-\Omega_{S_i})}$, $ \alpha_{S_i} = \mathbb{L}_0 \left(\frac{\gamma_{S_i}-1}{e}\right) +1$, and $\mathbb{L}_0(.) $ is the Lambert W-function in 0 branch. 
\end{lemma}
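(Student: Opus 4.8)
The plan is to collapse the two-dimensional-looking problem onto the single scalar function $\Phi(\tau_0)=\tau_0+\sum_{i=1}^{N'}\overline{\tau_{S_i}}(\tau_0)$, obtained by substituting the per-source optimal IT durations of Lemma~\ref{lemma:ITforgivenEH} into the objective~(\ref{eq:shceduling_obj}), and then to locate its minimizer $\tau_0^{*}$ by a first-derivative argument. Because the paper already records that this objective is convex in $\tau_0$, its derivative $\Phi'$ is nondecreasing and $\tau_0^{*}$ is characterized by $\Phi'(\tau_0^{*})=0$; hence to sandwich $\tau_0^{*}$ it suffices to exhibit one point where $\Phi'\ge 0$ (an upper bound) and one point where $\Phi'\le 0$ (a lower bound). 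The two structural facts I would lean on are: each map $\tau_0\mapsto\overline{\tau_{S_i}}(\tau_0)$ is nonincreasing, being the inverse of the strictly decreasing map $t\mapsto\frac{t}{\gamma_{S_i}}\big(2^{D_{S_i}/(Wt)}-1\big)$ of~(\ref{eq:f_func2}); and, by Lemma~\ref{lemma:ITforgivenEH}, $\overline{\tau_{S_i}}$ freezes at the constant maximum-power value~(\ref{eq:ITwoR_pmax}) once $\tau_0\ge\ddot{\tau}_{0,i}$, the threshold~(\ref{eq:EHwoR_pmax}).

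For the upper bound I would evaluate $\Phi'$ at $\tau_0^{ub}=\max_i\ddot{\tau}_{0,i}$. At this point every source obeys $\tau_0^{ub}\ge\ddot{\tau}_{0,i}$, so by Lemma~\ref{lemma:ITforgivenEH} every $\overline{\tau_{S_i}}$ sits at its constant maximum-power value and $\tfrac{d}{d\tau_0}\overline{\tau_{S_i}}=0$. Therefore $\Phi'(\tau_0^{ub})=1>0$, and convexity forces $\tau_0^{*}\le\tau_0^{ub}$. Intuitively, past $\tau_0^{ub}$ all IT durations are frozen at their minima, so further increasing $\tau_0$ only inflates the objective.

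For the lower bound I would first minimize the single-source objective $\tau_0+\overline{\tau_{S_i}}(\tau_0)$ in the energy-limited regime, i.e. solve $\tfrac{d}{d\tau_0}\overline{\tau_{S_i}}=-1$. Differentiating the implicit relation~(\ref{eq:f_func2}) and substituting $w=(D_{S_i}\ln 2)/(W\,\overline{\tau_{S_i}})$ turns the stationarity condition into the transcendental equation $(1-w)e^{w}=1-\gamma_{S_i}$; the change of variables $p=w-1$ then recasts it as $p\,e^{p}=(\gamma_{S_i}-1)/e$, whose principal solution is exactly $p=\mathbb{L}_0\!\big((\gamma_{S_i}-1)/e\big)$, i.e. $w=\alpha_{S_i}=\mathbb{L}_0\!\big((\gamma_{S_i}-1)/e\big)+1$. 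Back-substituting $\overline{\tau_{S_i}}=(D_{S_i}\ln 2)/(W\alpha_{S_i})$ into~(\ref{eq:f_func2}) recovers the stated per-source value $\tau_{0,i}^{*}=\frac{D_{S_i}\ln(2)}{W\alpha_{S_i}\gamma_{S_i}}\big(2^{\alpha_{S_i}/\ln(2)}-1\big)$. Setting $\tau_0^{lb}=\max_i\tau_{0,i}^{*}$ and letting $m$ attain the maximum, I would write $\Phi'(\tau_0^{lb})=1+\tfrac{d}{d\tau_0}\overline{\tau_{S_m}}(\tau_0^{lb})+\sum_{j\ne m}\tfrac{d}{d\tau_0}\overline{\tau_{S_j}}(\tau_0^{lb})$: the first two terms cancel by the single-source stationarity of source $m$, and each remaining derivative is $\le 0$ by monotonicity, so $\Phi'(\tau_0^{lb})\le 0$ and convexity gives $\tau_0^{*}\ge\tau_0^{lb}$.

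The \emph{main obstacle} is the closed-form single-source minimization: carefully differentiating the implicit curve~(\ref{eq:f_func2}), organizing the exponential terms, and recognizing the result as a Lambert-W equation through the substitution $p=w-1$. I would also verify two small domain facts — that each stationary $\tau_{0,i}^{*}$ lies on the energy-limited branch (so that~(\ref{eq:f_func2}), not~(\ref{eq:ITwoR_pmax}), applies), which holds because the single-source objective has slope $+1$ throughout the maximum-power branch and therefore turns over earlier; and that $(\gamma_{S_i}-1)/e\ge -1/e$ stays in the domain of $\mathbb{L}_0$, which follows since $\gamma_{S_i}\ge 0$.
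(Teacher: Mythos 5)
Your route is essentially the paper's own: the paper identifies ${\tau}_0^{ub}$ with the single-source solutions in which Eqs.~(\ref{eq:energy1APNS})--(\ref{eq:pmax1APNS}) hold with equality at $P^{max}$ (your $\max_i \ddot{\tau}_{0,i}$), and ${\tau}_0^{lb}$ with the maximum of the single-source optima, deferring the algebra to Lemmas 5 and 6 of \cite{Onalan2020} with the nonlinear-EH substitutions. Your convexity/derivative sandwich, the monotonicity of $\overline{\tau_{S_i}}(\cdot)$, and the Lambert-W reduction via $(1-w)e^{w}=1-\gamma_{S_i}$ and $p=w-1$ are exactly the deferred content, and you work them out correctly.

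There is, however, one step whose justification does not hold up: the claim that each stationary point $\tau_{0,i}^{*}$ lies on the energy-limited branch ``because the single-source objective has slope $+1$ on the maximum-power branch.'' That observation only shows the single-source \emph{minimizer} lies at or before the threshold $\ddot{\tau}_{0,i}$; it does not show that the stationary point of the energy-limited curve lies before $\ddot{\tau}_{0,i}$, which is what your argument actually uses. The two can differ: when $\alpha_{S_i} > \ln\left(1+P^{max}g_{S_i}/(WN_0)\right)$, i.e., when the harvested power is large relative to $P^{max}$, the Lambert-W point falls beyond the kink, the true single-source minimizer is the kink itself, and your cancellation $1+\tfrac{d}{d\tau_0}\overline{\tau_{S_m}}(\tau_0^{lb})=0$ fails because that derivative equals $0$ there, not $-1$. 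To be fair, in this regime the lemma's own formulas degenerate (the stated ${\tau}_0^{lb}$ would exceed ${\tau}_0^{ub}$), so the fact you need is really an implicit standing assumption of the lemma --- the max-power constraint is non-binding at every single-source optimum --- rather than something derivable from the slope observation; your proof should state it as an assumption instead of presenting it as a consequence. The paper's two-line proof (and, by inheritance, the cited proofs in \cite{Onalan2020}) is silent on the same point, so this does not distinguish your argument from theirs; it is simply the one spot where your write-up asserts more than it establishes.
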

\begin{proof}
    The upper bound is the maximum of the solutions of the networks where $S_i$ is the single source and transmitting at $P^{max}$, i.e., Eqs.~(\ref{eq:energy1APNS})-(\ref{eq:pmax1APNS}) hold with equality, for $i = 1,\ldots,N'$. The lower bound is the maximum of the optimal solutions of the networks where $S_i$ is the single source, for $i = 1,\ldots,N'$. The proof follows the proof of Lemma 5 and 6 in \cite{Onalan2020} by updating equations with nonlinear EH formulations.
\end{proof}

\begin{algorithm}[H]
	\caption{\small \textbf{Non-linear EH Power Constrained Multiple User Time Minimization Algorithm (NL-POWMU)}}\label{alg:mrttma}
	\begin{algorithmic}[1]
	    \small\STATE {\textbf{Input: } $P^{max}$, $P_A$, $\gamma_{S_i}$, $\alpha_{S_i}$, $\Psi_{S_i}$, $\Omega_{S_i}$, $M_{S_i}$,$D_{S_i}$, for $i =1,\ldots,N'$}
	     \small\STATE { \textbf{Output: } $\tau_{0}$, ${\tau_{S_i}}$, $P_{S_i}$ for $i =1,\ldots,N'$ }
		\small\STATE Compute $\tau_0^{ub}$  and $\tau_0^{lb}$ by Lemma~\ref{lemama:bounds}
		\small\WHILE {$\tau_0^{ub} - \tau_0^{lb} > 2\epsilon$}		
		\small\STATE $\tau_{0}^{'} = \frac{\tau_0^{ub} + \tau_0^{lb}}{2}$		
		\small\STATE \textbf{if }  $\frac{dg(\overline{\tau_0})}{d(\overline{\tau_0})}\rvert_{\overline{\tau_0} = \tau_{0}^{'}} \geq 0$,  \textbf{then } $\tau_0^{ub} \leftarrow \tau_{0}^{'}$.
		\small\STATE \textbf{if }  $\frac{dg(\overline{\tau_0})}{d(\overline{\tau_0})}\rvert_{\overline{\tau_0} = \tau_{0}^{'}} \leq 0$, \textbf{then }  $\tau_0^{lb} \leftarrow \tau_{0}^{'}$.
		\small\ENDWHILE
		\small\STATE Evaluate and return ${\tau_{0}}= \tau_{0}^{'}$, ${\tau_{S_i}}= \overline{\tau_{S_i}}({\tau'_{0}})$ and {$P_{S_i}$},  for $i=1,\ldots,N'$
	\end{algorithmic}
\end{algorithm}

Algorithm~\ref{alg:mrttma} presents the optimal algorithm NL-POWMU. Initially, the algorithm sets the lower ($\tau_0^{lb}$) and upper ($\tau_0^{ub}$) bounds on $\tau_0$ using Lemma~\ref{lemama:bounds} (Line 3). The algorithm, then, applies the bisection search on $\tau_0$  based on the convexity of the objective function over $\tau_0$ (Lines 4-7). This search aims to iteratively converge to the optimal solution by evaluating the slope of the objective function at the midpoint of an EH time interval and shrinking the interval accordingly. In each iteration, the current solution, $\tau_0 ^{'}$, is set to the mean of the $\tau_0^{lb}$ and $\tau_0^{ub}$ (Line 5).  The objective function is denoted by $g(\overline{\tau_0}) = \overline{\tau_0} + \sum_{i \in \{1,\ldots,N'\}} \overline{\tau_{S_i}}(\overline{\tau_{0}})$. If the first derivative of $g(\overline{\tau_0})$ calculated at $\tau_0 ^{'}$ is non-negative, indicating an increasing trend at that point, the upper bound $\tau_0^{ub}$ is set to $\tau_0 ^{'}$ (Line 6). Conversely, if the derivative is non-positive, indicating a decreasing trend at that point, the lower bound $\tau_0^{lb}$ is set to $\tau_0 ^{'}$ (Line 7). This iterative process continues until the difference between $\tau_0^{ub}$ and $\tau_0^{lb}$ becomes smaller than a predefined small constant $2\epsilon$, where $\epsilon >0$ is used to control the solution's accuracy (Line 4). If the derivative becomes 0, indicating the discovery of the optimal solution, both $\tau_0^{ub}$ and $\tau_0^{lb}$ are set to $\tau_0 ^{'}$,  so the algorithm terminates. The last computed value of $ \tau_0 ^{'} $ is the optimal EH time. Subsequently, the algorithm evaluates and returns  the optimal IT times $\overline{\tau_{S_i}}({\tau'_{0}})$ for $\tau_0 ^{'}$ and power allocation $P_{S_i}$ for $i=1,\ldots, N'$ (Line 8).

\section{Teacher-Student Learning based Relay Selection Framework} \label{sec:relay-learning}


This section presents the deep learning framework for the relay selection problem. The conventional approach to obtain source-relay pairs without deep learning is to resort to an iterative optimization algorithm at the beginning of each time block at the AP, which leads to substantial runtimes. In our deep-learning-based design, the AP can predict optimal relay selections with high accuracy and low complexity, thanks to the well-trained deep-learning architecture based on extensive datasets, enabling the generalization of patterns and relationships in the problem.

The relay selection problem is formulated as a classification problem with $(K+1)$ classes representing $(K+1)$ possible relay selection options for each source as follows:
\begin{equation*}
    \mathcal{P}: \{\mathbf{h},\mathbf{g}\}  \rightarrow \{\mathbf{b}\},
\end{equation*}
where $\mathbf{h}$ $\in \mathbb{R}^{(N+K)\times 1}$ is the input vector whose elements are the DL channel gains 
$h_{AP}^{S_i}$, $h_{AP}^{R_j}$  for $i = 1,2,\ldots, N$ and $j=1,2,\ldots K$;  $\mathbf{g}$ $\in \mathbb{R}^{(K+N(K+1))\times 1}$ is another input vector whose elements are the UL channel gains $g_{S_i}^{R_j}$, $g_{R_j}^{AP}$ for $i=1,2,\ldots, N$ and $j=0,1,2,\ldots K$; $\mathbf{b}$ $\in \mathbb{R}^{(K+1)\times N}$ is the output, where $b_i^j$, the element at $j^{th}$ row and $i^{th}$ column, takes value 1 if $R_j$ is selected for $S_i$ for $j=0,1,2,\ldots, K$ and $i=1,2,\ldots, N$, and 0 otherwise.  For instance, in a 2-source-2-relay network, the case where $R_1$ is selected for $S_1$ and no relay is selected for $S_2$ is mapped to
\begin{equation*}
 \mathbf{b}=\begin{bmatrix}
0 & 1 \\
1 & 0 \\
0 & 0 
\end{bmatrix}. 
\end{equation*}
This representation allows us to design a deep learning architecture to obtain the approximation function $\mathcal{P}$ for a low-complexity solution to the relay selection problem. 

\begin{figure}[!htb]
    \centering
    \subfloat[]{\includegraphics[width=.45\textwidth]{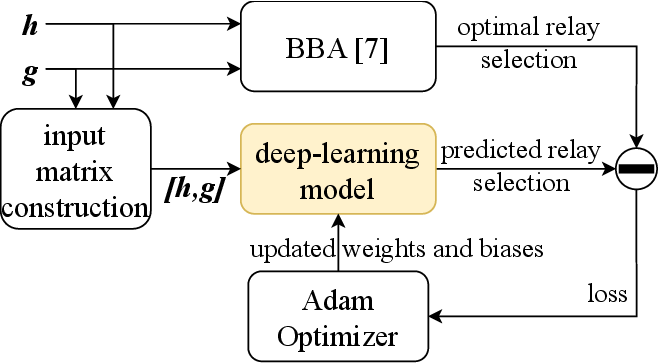} \label{fig:deep-model-train}}
    \subfloat[]{\includegraphics[width=.45\textwidth]{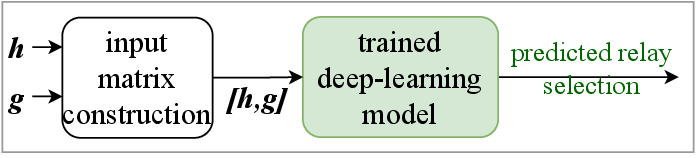}\label{fig:deep-model-pred}}
    \caption{a) Training and  b) Prediction processes of deep learning model}
    \label{fig:deep-model}
\end{figure}

Fig.~\ref{fig:deep-model-train} depicts the training process of the proposed deep learning architecture.
The architecture learns the approximation function $\mathcal{P}$ offline and supervised. The optimal relay selections are obtained via BBA \cite{Onalan2020}. The training process iteratively improves the deep learning model by updating weights and biases with the Adaptive Moment Estimation (Adam) \cite{adam} to minimize the loss function. We employ cross-entropy loss function  
\begin{equation}
 \mathcal{CEL}(\mathbf{\overline{b}},\mathbf{b})= \frac{1}{N}\sum_{i=1}^N \sum_{j=0}^K {b_i^j} \log\left(\frac{\exp{(\overline{b}_i^j)}}{\sum_{j=0}^K \exp{(\overline{b}_i^j)}} \right)
\label{eq:cel}
\end{equation}
between the optimal relay selections $\mathbf{b}$ and predicted ones $\mathbf{\overline{b}}$ to observe how well classification model performs, where $\overline{b}_i^j \in \mathbb{R}$ is the output of the deep learning model corresponding to the predicted value of $b_i^j$ for $j=0,1,2,\ldots, K$ and $i=1,2,\ldots, N$. The cross-entropy loss captures the divergence between the probability distribution of the predicted and optimal relay selections.  

After the completion of the training, the resulting deep learning model with optimized weights and biases is used to predict relay selections online at the beginning of each time block at the AP in a single step, as shown in Fig.~\ref{fig:deep-model-pred}. The prediction step predominantly comprises the multiplication and addition of weights and biases, thereby resulting in a significant reduction in runtime. As a result, the computational complexity becomes the burden of offline training instead of online prediction.

In deep-learning architecture design, we prefer CNNs due to the following advantages \cite{Goodfellow-et-al-2016}:
\begin{enumerate}[label=\roman*)]
    \item \textbf{Sparse interactions:}  In contrast to fully connected DNNs in which every output interacts with every input, CNNs exhibit sparse interactions between the inputs and outputs. This is achieved by using smaller kernels than the input. Thanks to sparsity, CNNs require fewer parameters, which enhances memory and computation efficiency.
    

    \item \textbf{Parameter sharing:} Convolution operation provides parameter sharing by using each kernel element across input positions and enabling a single parameter set to be learned for all locations. Although this does not impact forward propagation runtime, convolution operation substantially enhances memory efficiency and statistical efficacy compared to dense matrix multiplications of traditional DNNs.
    \item \textbf{Spatial relationships:} The parameter sharing enables a single set of learned parameters to be applied to different spatial locations. If one parameter set is useful to extract features in one spatial position, then it can be beneficial at other positions to capture shared common patterns. In the relay selection problem, the relation between DL and UL channel gains in both the source-to-relay and relay-to-AP links needs to be learned.  A pattern resulting in $S_1$  to select $R_j$  in one network realization can be repeated for $S_N$ to select $R_j$  in another network realization. As channel gains of different sources are located at different locations of the input matrix,  observing spatial relations is critical.
\end{enumerate}

We employ 2D convolution blocks to include the larger area in convolution operation to discover similar patterns between the source-relay pairs. Therefore, we transform 1D input vectors to the closest size rectangular matrix where the empty elements are filled by zero padding in the input matrix construction step. For instance, in a 1-source-3 relay network, there are four UL and seven DL channel gains, summing to 11 input parameters. We construct a 3x4 input matrix where the last element is filled with zero.

Initially, we design a deep learning architecture consisting of 2D convolutional blocks and skip connections, which is called \textit{Skip connected Convolutional NETwork} (SC-NET). The details of SC-NET are presented in Section~\ref{sec:sc-net}.
 Then, we propose another deep learning architecture by replacing 2D convolutional blocks with the ones running in multiple flows for memory-constrained applications. Such blocks are called \textit{inception blocks}, so the proposed architecture is named as \textit{SKip INception NETwork} (SKIN-NET), and described in detail in Section~\ref{sec:skin-net}. Finally, we compress the SC-NET by reducing the number of layers and nodes in the architecture for even lower computational complexity. The compressed architecture is called MINI-SC-NET. Obviously, a lower number of parameters limits the learning capacity of the network and results in lower accuracy. To overcome such learning limitations, we propose to apply teacher-student learning where a bigger teacher network, SC-NET, is employed to train the smaller student network, which is called STU-SC-NET, in addition to the truth labels. The details on teacher-student learning are given in Section~\ref{sec:stu-sc-net}.

\begin{figure*}[!thb]
    \centering
    \subfloat[]{\includegraphics[width=.22\textwidth]{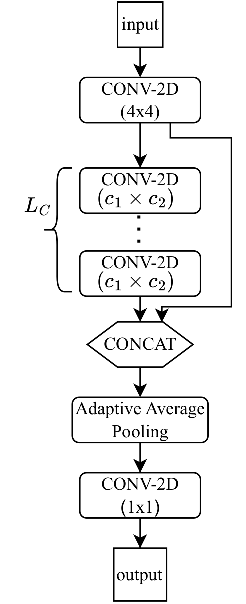}\label{fig:sc-net}} \hfill
    \subfloat[]{\includegraphics[width=.22\textwidth]{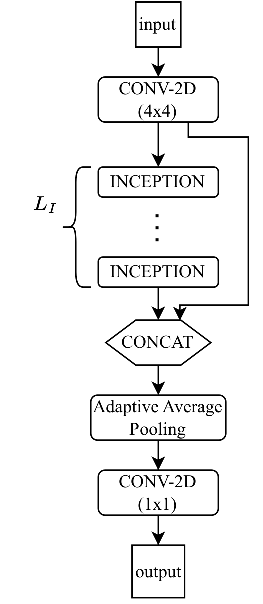}\label{fig:skin-net}} \hfill
    \subfloat[]{\includegraphics[width=.35\textwidth]{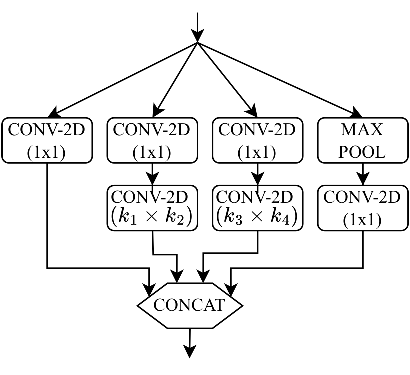}\label{fig:incep}}
    \caption{ a) Proposed Network Architecture with convolution blocks b) Proposed Network Architecture with inception blocks c) Details of inception block}
    \label{fig:network}
\end{figure*}

\subsection{SC-NET Architecture} \label{sec:sc-net}
Fig.~\ref{fig:sc-net}  presents the deep learning architecture of SC-NET. SC-NET  applies a 2D convolution block (CONV-2D) in the first and the last layers with kernel sizes 4x4 and 1x1, respectively. After the first convolution block, there are  $L_C$ 2D convolution blocks with kernel size $c_1$x$c_2$. $L_C$, $c_1$, and $c_2$ are optimized during the training process based on the performance of the architecture on the validation set.  ReLU activation function is applied for all convolutional layers to introduce non-linearity. Further, batch normalization is applied after all convolutional layers for faster and more stable training.

SC-NETs leverage skip connections, which are shortcuts connecting the output of one layer to the input of another layer that may not be immediately adjacent. In deep CNNs, skip connections are helpful in dealing with diluted information or vanishing gradients passing through multiple layers by allowing information to bypass several layers \cite{Tong2017}. The skip connection between two non-adjacent layers can be performed by addition\cite{resnet} or concatenation\cite{huang2017}. We prefer concatenation to combine features from different layers and enhance feature reusability. In SC-NET, the output of the first layer is concatenated to the last layer before the adaptive average pooling.

Following the concatenation step, SC-NET employs adaptive average pooling. Typically, the pooling layer downsamples its inputs, thereby reducing the number of trainable parameters and offering lower complexity \cite{Goodfellow-et-al-2016}. The downsampling also prevents overfitting and leads to generalizable and robust results.  Adaptive average pooling adapts the necessary kernel size to generate an output of the given dimensionality from the given input and performs downsampling by returning the average value of the inputs within the kernel \cite{adaptivepooling}. We prefer adaptive average pooling to align with the desired output vector dimension, concurrently enhancing compression and robustness.

 \subsection{SKIN-NET Architecture} \label{sec:skin-net}
Fig.~\ref{fig:skin-net} presents the deep learning architecture of SKIN-NET. Similar to SC-NET, SKIN-NET also employs 2D convolution blocks with kernel sizes 4x4 and 1x1  in the first and the last layers, adaptive average pooling, and a skip connection from the output of the first layer to the last layer before the adaptive average pooling.  

SKIN-NET replaces $L_C$ convolution blocks of SC-NET with $L_I$ inception blocks. The inception blocks run multiple flows, including convolutional blocks, in parallel to capture more diversified features \cite{inception}. They are preferred especially for memory constrained applications since they can achieve similar performance with lower network parameters.  In our architecture, an inception block includes four parallel flows whose outputs are concatenated at the end. All lines apply 2D convolution with 1x1 kernel size. The fourth line uses max-pooling before the convolution. The second and third lines apply another convolution with kernel size $k_1\times k_2$ and $k_3\times k_4$. Kernel sizes $k_1, k_2, k_3$, and $k_4$ must be optimized based on network size to reach optimum performance. Batch normalization is again applied after all inception layers for faster and more stable training.

\subsection{Teacher-Student Learning} \label{sec:stu-sc-net}

This section proposes teacher-student learning to reduce the complexity of the CNN-based solution without sacrificing optimality. Teacher-student learning transfers knowledge from a well-performing teacher network to a smaller or less powerful student network, using techniques like soft target guidance and distillation \cite{Gou2021}. We use SC-NET as a teacher network to train a smaller student architecture, called STU-SC-NET. To introduce the teacher-student learning concept, we prefer SC-NET over SKIN-NET due to the prolonged training times attributed to inception layers within SKIN-NET. It is worth noting that teacher-student learning is network-agnostic and can be extended to other teacher networks. In this section, we further propose an architecture search algorithm to determine the best student architecture by evaluating the trade-off between complexity and optimality.

A compact architecture can be reached by decreasing $L_C$ and the number of nodes in each convolutional layer. However, standalone training of such a smaller network would be incapable of reaching the accuracy of SC-NET. Therefore, besides the truth labels, we benefit from the expertise of SC-NET to train the smaller student network. Teacher-student learning benefits from both the hard-coded truth labels $\mathbf{b}$, obtained by BBA, and the soft outputs of SC-NET, denoted by $\mathbf{\overline{b}}$, to train STU-SC-NET.  The intuition behind using soft labels from another network is similar to using hard and soft decision coding in the channel coding domain. This approach allows the student model to additionally learn from the teacher's expertise and improve its performance.

The difference between the truth labels $\mathbf{b}$, and the predicted outcomes of STU-SC-NET, $\mathbf{\overline{\overline{b}}}$, where $\overline{\overline{b}}_i^j \in \mathbb{R}$ is the output of student model corresponding to $b_i^j$ for $j=0,1,2,\ldots, K$ and $i=1,2,\ldots, N$, is measured by  cross-entropy loss $\mathcal{CEL}$ by following Eq.~\ref{eq:cel}. Besides, Kullback-Leibler-Divergence,  which is often used as loss function in KD \cite{Gou2021}, is applied between the soft outputs of SC-NET, $\mathbf{\overline{b}}$ and the predicted outcomes of STU-SC-NET $\mathbf{\overline{\overline{b}}}$ as given by
\begin{equation}
 \mathcal{KLD}(\mathbf{\overline{\overline{b}}},\mathbf{\overline{b}})= \frac{1}{N}\sum_{i=1}^N \sum_{j=0}^K \overline{b}_i^j \log\left( \frac{\overline{b}_i^j}{\overline{\overline{b}}_i^j} \right).
 \label{eq:kld}
\end{equation}
Then, the overall training loss function is the weighted average of two loss values and expressed as 
\begin{equation}
 \mathcal{L}(\mathbf{\overline{\overline{b}}},\mathbf{\overline{b}},\mathbf{b}) = \lambda_1 \mathcal{CEL}(\mathbf{\overline{\overline{b}}},\mathbf{b}) +  \lambda_2  \mathcal{KLD}(\mathbf{\overline{\overline{b}}},\mathbf{\overline{b}}) ,
 \label{eq:loss_ts}
\end{equation}
where $\lambda_1$ and $\lambda_2$ are constant weight values.  $\mathcal{L}(\mathbf{\overline{\overline{b}}},\mathbf{\overline{b}},\mathbf{b})$ enables knowledge transfer from teacher SC-NET to student STU-SC-NET by incorporating $\mathcal{KLD}(\mathbf{\overline{\overline{b}}},\mathbf{\overline{b}})$ .


Next, we describe Dichotomous based Architecture Search Algorithm (DASA) for STU-SC-NET, which determines the architecture of STU-SC-NET by initializing the architecture as in the given teacher network and iteratively shrinking it until reaching a certain accuracy level measured by the validation loss.  The architecture of the given teacher network, SC-NET in our case, is determined by its number of layers, $L_C^{SC}$; its set consisting of the number of nodes in each layer, $M^{SC}$; and its total number of trainable parameters, $\Omega^{SC}$. DASA constructs the student architecture by setting the number of layers, the number of nodes in each layer, and the total number of trainable parameters, denoted by $L_C^{STU}$, $M^{STU}$, and  $\Omega^{STU}$, respectively; and outputs the trained student network. An architecture's trainable parameters are based on each layer's layers and nodes, hence, DASA set a target number of trainable parameters for STU-SC-NET and searches for the appropriate $L_C^{STU}$, $M^{STU}$ in each iteration. Since the commonly used grid search method is intractable and computationally expensive for such parameter search \cite{Bao2006,Yang2019}, we apply a method that sequentially adjusts the number of nodes from the last layer to the first layer of the architecture. The number of nodes in each layer is selected from a set of available nodes, denoted by $\mathcal{M}$, rather than a continuous search for computation efficiency. We refer to the algorithm applying this sequential search as \textit{Sequential Parameter Search Algorithm} (Seq-PSA).

\begin{algorithm}
\caption{\small \textbf{ Dichotomous based Architecture Search Algorithm (DASA) for STU-SC-NET}}\label{alg:search_arch}
    \begin{algorithmic}[1]
    \STATE \textbf{Input}:  $\mathbf{b}$, $\mathbf{b'}$, $\mathcal{M}$, $\Omega^{SC}$, $L_C^{SC}$, $M^{SC}$ \label{alg:dasa_line_input}
    \STATE \textbf{Output}: Trained STU-SC-NET \label{alg:dasa_line_output}
    \STATE $L_C^{STU} \leftarrow L_C^{SC}$, $M^{STU} \leftarrow M^{SC}$, $\Omega^{STU}  \leftarrow \Omega^{SC}$ \label{alg:dasa_init}
    \STATE $ub \leftarrow \Omega^{SC}$ and $lb \leftarrow 0$ \label{alg:dasa_bounds}
    \REPEAT  \label{alg:dasa_dicho_begin}
    \STATE $\Omega^{STU} =  [(ub + lb) /2 ]$  \label{alg:dasa_dicho_set}
    \STATE  $L_C^{STU}$, $M^{STU}$, $\Omega^{STU} \leftarrow$ \textbf{Seq-PSA}$(\mathcal{M}$, $\Omega^{STU}, L_C^{STU})$ \label{alg:dasa_update_stu}
    \STATE \textbf{Train} STU-SC-NET \label{alg:dasa_train_stu} and obtain $v^{STU}$
    \STATE \textbf{if} {$v^{STU}$ < $v^{th}$}  \textbf{then} $ub \leftarrow \Omega^{STU} $ \label{alg:dasa_update_ub}
    \STATE \textbf{else} $lb \leftarrow \Omega^{STU}$ \label{alg:dasa_update_lb}
    \UNTIL $ub-lb < \epsilon$ \label{alg:dasa_dicho_end}
    \end{algorithmic}
\end{algorithm}

Algorithm~\ref{alg:search_arch} presents DASA. The truth labels, $\mathbf{b}$; a set of available number of nodes, $\mathcal{M}$; the soft outputs of SC-NET, $\mathbf{b'}$; the architecture of SC-NET with  $L_C^{SC}$, $M^{SC}$, $\Omega^{SC}$,  are the inputs of DASA (Line~\ref{alg:dasa_line_input}). The output of DASA is the trained STU-SC-NET (Line~\ref{alg:dasa_line_output}). The architecture of STU-SC-NET is initialized the same as the teacher network, i.e. the number of layers  $L_C^{STU}$, nodes in each layer $M^{STU}$, and the total number of trainable parameters, $\Omega^{STU}$ of STU-SC-NET are set to $L_C^{SC}$, $M^{SC}$, $\Omega^{SC}$, respectively (Line~\ref{alg:dasa_init}). Dichotomous search (Lines~\ref{alg:dasa_dicho_begin}-\ref{alg:dasa_dicho_end}) is defined over the number of trainable parameters and operates in an interval determined by an upper bound ($ub$)  and a lower bound ($lb$), which are initially set to $\Omega^{SC}$ and 0, respectively (Line~\ref{alg:dasa_bounds}).
In each iteration, Dichotomous search sets $\Omega^{STU}$  to the middle point of the interval (Line~\ref{alg:dasa_dicho_set}), which is the target value to reach with the new architecture. Then, Seq-PSA, presented in Algorithm~\ref{alg:grid},  constructs the new STU-SC-NET architecture by selecting number of layers $L_C^{STU}$ and  nodes in each layer $M^{STU}$,  for given $\mathcal{M}$ and the target value of $\Omega^{STU}$ (Line~\ref{alg:dasa_update_stu}). This STU-SC-NET architecture is trained via teacher-student learning considering the loss function given in Eq.~\ref{eq:loss_ts} (Line~\ref{alg:dasa_train_stu}). The training ends with a validation loss value $v^{STU}$, which is a cross-entropy loss by Eq.\ref{eq:cel}. If $v^{STU}$ is lower than the threshold $v^{th}$, $ub$ is updated as $\Omega^{STU}$ to shrink the architecture further (Line~\ref{alg:dasa_update_ub}). Otherwise, $lb$ is set to $\Omega^{STU}$ to enlarge the architecture to fit the desired validation loss range (Line~\ref{alg:dasa_update_lb}). Iterations of dichotomous search continue until the difference between $ub$ and $lb$ is negligible (Line~\ref{alg:dasa_dicho_end}).


\begin{algorithm}
\caption{\small \textbf{Sequential Parameter Search Algorithm (Seq-PSA) }}\label{alg:grid}
    \begin{algorithmic}[1]
    \STATE \textbf{Input}:  $L_C^{STU}$, $\Omega^{STU}$, and $\mathcal{M}$ \label{alg:grid_line_input} 
    \STATE \textbf{Output}: updated $L_C^{STU}$, $M^{STU}$, and $\Omega^{STU}$ \label{alg:grid_line_output}
    \STATE  $k \leftarrow L_C^{STU}$ and  $\Omega^{STU'} \leftarrow 0$  \label{alg:grid_init_k}
    \REPEAT 
    \STATE  Calculate the corresponding $\Omega^{STU'}_m, \; \forall m \in \mathcal{M}$  at layer $k$ \label{alg:grid_search_m}
    \STATE Pick $m$ with $\min |\Omega^{STU} - \Omega^{STU'}_m|$ for layer $k$   \label{alg:grid_pick_m}
    \STATE Update $M^{STU}$ and $\Omega^{STU'}$ for $m$ nodes at layer $k$ \label{alg:grid_update_omega}
    \STATE $k \leftarrow k-1$  \label{alg:grid_update_k}
    \UNTIL $|\Omega^{STU} - \Omega^{STU'}| < \delta$ and $k>0$ \label{alg:grid_end_cond}
    \STATE $\Omega^{STU} \leftarrow \Omega^{STU'}$  \label{alg:grid_end}
    \end{algorithmic}
\end{algorithm}

Algorithm~\ref{alg:grid} presents Seq-PSA to efficiently determine the next $L_C^{STU}$, $M^{STU}$, and $\Omega^{STU}$ based on the current  $L_C^{STU}$, $M^{STU}$, target $\Omega^{STU}$, and possible number of nodes $\mathcal{M}$.  Seq-PSA adjusts the number of nodes from the last layer to the first layer of the architecture, ensuring convergence to the target $\Omega^{STU}$ with a minimal margin of error. The variable $k$ is used to track the current layer, whose number of nodes are being adjusted in the current iteration, and is initialized as the last layer (Line~\ref{alg:grid_init_k}). The variable $\Omega^{STU'}$ is used to track the current value of $\Omega^{STU}$ through the iterations, and is initialized as zero (Line~\ref{alg:grid_init_k}). For the current layer $k$, Seq-PSA explores all possible values, denoted by $m$, $m \in \mathcal{M}$, and calculates the corresponding number of trainable parameters, denoted by $\Omega^{STU'}_m$ (Line~\ref{alg:grid_search_m}). $m$ providing the closest $\Omega^{STU'}_m$ to $\Omega^{STU}$ is selected for layer $k$ (Line~\ref{alg:grid_pick_m}). Accordingly, $M^{STU}$ and $\Omega^{STU'}$ are updated (Line~\ref{alg:grid_update_omega}). Note that $\mathcal{M}$ always contains 0, and picking $m=0$ means removing the layer from the architecture.  Seq-PSA continues by adjusting the number of nodes at the previous layer (Line~\ref{alg:grid_update_k}) until either all layers are adjusted, or the current $\Omega^{STU'}$ is close enough to target $\Omega^{STU}$ by $delta$ (Line~\ref{alg:grid_end_cond}). At the end, $\Omega^{STU}$ is set to $\Omega^{STU'}$ (Line~\ref{alg:grid_end}).


\section{Performance Evaluation} \label{section:simRes}

The goal of this section is to analyze the performance of the proposed deep learning approaches in terms of training and validation
loss performances, architecture sizes, and runtime complexities, and compare their performance to the optimal solution by BBA, the state-of-art sub-optimal algorithms Opportunistic Relaying (OR) \cite{Chen2015}, One Branch Heuristic (OBH) \cite{Onalan2020}, and the state-of-art deep learning framework called REL-NET \cite{Onalan23}. BBA is the optimal algorithm based on branching on relay selection variables and solving the scheduling and power control problem when the selection is completed, while incorporating pruning techniques using lower and upper-bound generation specific to our problem, as described in more detail in Section~\ref{section:probForm}. OBH applies a branch-and-bound strategy similar to BBA, but instead of all branches, it follows a single branch 
corresponding to selecting $R_j$ for $S_i$  with the maximum $b_i^j \in \mathbb{R}$  as a solution of the relaxed version of Problem~\ref{eq:relSel}. OR considers UL and DL channel gains in both source-to-relay and relay-to-AP links and picks relay $R_j$ with $\argmax_j \min \left(g_{S_i}^{R_j}h_{AP}^{S_i}, g_{R_j}^{AP}h_{AP}^{R_j} \right)$ for each source $S_i$, for $i =1, \ldots, N$. REL-NET is a feed-forward DNN that treats the relay selection problem as a classification problem, where each possible set of source-relay pairing is treated as a distinct class, leading to a total of $(K+1)^N$ classes. This is in contrast to our approach, which formulates the relay selection problem with $(K+1)$ classes. We further use the same compact architecture of STU-SC-NET trained only by the truth labels, called MINI-SC-NET, as a benchmark approach to illustrate the benefit of teacher-student learning.


\subsection{Data Generation} \label{sec:data}

DL and UL channel gains, $\mathbf{h}$  and $\mathbf{g}$, are generated as follows: 
Sources are uniformly distributed in a circular quadrant between radius $3-4$ m. The AP is located at the center of the circle. Relays are positioned between the sources and the AP at a distance of $2$ m from the AP and an equal distance from each other. Large scale {channel} statistics are formulated as $ PL(d) = PL(d_0) - 10\upsilon \log_{10}(d/d_0) + Z$, where $PL(d_0)$ is the free space path loss at unit distance $d_0$ in dB, $d$ is the distance between the transmitter and receiver, $PL(d)$ is the path loss at a distance $d$ in dB, $\upsilon$ is the path-loss exponent, and $Z$ is a zero-mean Gaussian random variable with standard deviation $\sigma_Z$. For small-scale statistics, the Rayleigh fading model is used with scale parameter $\Omega$ set to the mean power level determined by $PL(d)$. 

 $700,000$ independent random network realizations are generated for  each different network size. Generated data is divided into the train, validation, and test sets, with sizes of $689,000$; $10,000$; and $1,000$, respectively. The performance of all the algorithms is measured over the test data set. The simulation results are presented as an average of the performance over $1,000$ independent random network realizations of the test set. 

 We consider the following simulation parameters:  $a_k=150$, $b_k=0.014$, $M_k=0.024$, $W = 1$ MHz,  $N_0 = -90$ dBm,  $PL(d_0)=31.67$ dB, $\sigma_Z= 2$ dB, and $\upsilon= 2$, $P_{AP} = 4$ W, $P^{max} = 10$ mW, $D_{Si} = 50$ bits $\forall i$.  Without loss of generality, $D_{Si}$ values are selected  the same for simpler implementation.

 \begin{figure*}
     \centering
    \subfloat[]{\includegraphics[width=.4\textwidth]{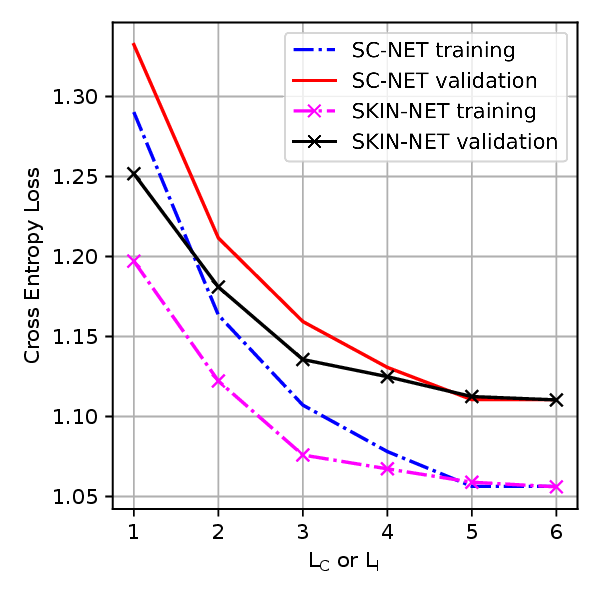}\label{fig:loss_vs_layers}}
    \subfloat[]{\includegraphics[width=.4\textwidth]{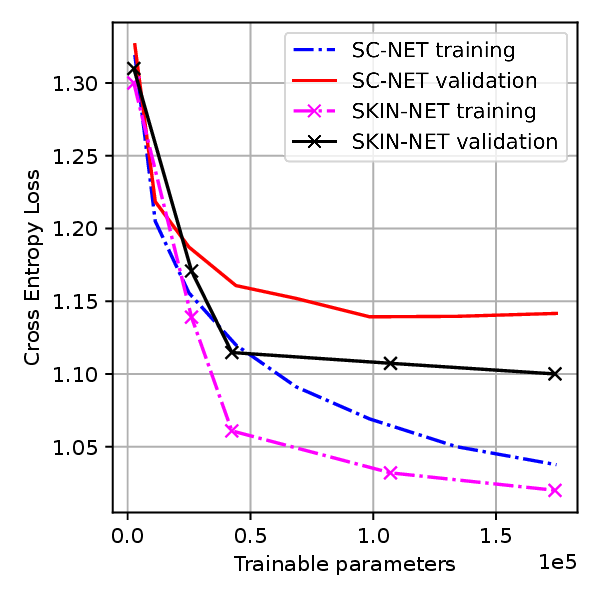}\label{fig:loss_vs_params}}
     \caption{Cross Entropy Loss vs. a) number of layers  b) number of trainable parameters in a 3-source-2-relay network. }
     \label{fig:cross-entropy-loss}
 \end{figure*}

\subsection{Deep Learning Performance}
For the proposed deep learning approaches, this section presents the training and validation loss performances, the proposed architecture sizes, and runtime complexities. The proposed architectures are implemented in PyTorch. Their weights are optimized by Adam optimizer with batch size 128 and learning rate of $10^{-3}$. The hyperparameters such as kernel size and number of layers are fine-tuned via grid search unless otherwise stated. Training is performed on an NVIDIA TITAN XP graphics processing unit (GPU). Testing is performed on  Windows 64-bit OS, 8GB RAM, and Intel i7-7600U dual-core processor. We consider the following algorithm parameters for architecture search: $\mathcal{M}=\{ 64, 32, 24, 16, 8, 2, 0\}$, and $\epsilon =300$.

Fig.~\ref{fig:loss_vs_layers} depicts the training and validation loss for SC-NET and SKIN-NET as the number of layers, $L_C$ or $L_I$,  increases from 1 to 6 while maintaining a fixed number of trainable parameters in a 3-source-2-relay network. We compare SC-NET and SKIN-NET to observe the impact of convolution and inception blocks on the loss performance.  The increasing number of layers improves the cross-entropy loss until $L_C =5$ or $L_I=5$. SKIN-NET provides lower loss values with fewer layers. Lower validation losses with the same complexity underpin the success of inception-based architectures. Note that this graph does not include STU-SC-NET since its loss function evaluates the combination of $\mathcal{CEL}$ and $\mathcal{KLD}$ as given in Eqn.~(\ref{eq:loss_ts}), whereas SC-NET and SKIN-NET employ $\mathcal{CEL}$ only.

Fig.~\ref{fig:loss_vs_params} depicts the training and validation loss for SC-NET and SKIN-NET as the number of trainable parameters increases  in a 3-source-2-relay network. Increasing the number of parameters leads to an enhancement in cross-entropy loss up to 100,000 parameters. SKIN-NET achieves a lower validation and training cross-entropy due to the utilization of inception blocks. These blocks enable the creation of deeper architectures with the same number of trainable parameters compared to SC-NET.

\begin{table}[thb]
\caption{SC-NET Parameters}
\label{tab:sc_net}

\centering
\begin{tabular}{|c|c|c|c|c|c|c|c|}
\hline
N & K & input\_size & $L_C$ & Number of nodes per layer & \begin{tabular}[c]{@{}c@{}}kernel \\      $(c_1 x c_2)$\end{tabular} & Trainable parameters & \begin{tabular}[c]{@{}c@{}}Runtime \\      (ms)\end{tabular} \\ \hline
1 & 2 & 2x4         & 5     & 16,64,64,32,32,16,10      & 2x2                                                                  & 36,457               & 0.11                                                         \\ \hline
2 & 2 & 3x4         & 5     & 16,64,64,32,32,16,10      & 3x2                                                                  & 55,131               & 0.12                                                         \\ \hline
3 & 2 & 4x4         & 5     & 16,64,64,32,32,16,10      & 3x3                                                                  & 86,391               & 0.14                                                         \\ \hline
4 & 2 & 5x4         & 5     & 16,64,64,32,32,16,10      & 4x4                                                                  & 166,543              & 0.21                                                         \\ \hline
5 & 2 & 6x4         & 5     & 16,64,64,32,32,16,10      & 5x4                                                                  & 272,945              & 0.24                                                         \\ \hline
\end{tabular}
\end{table}
\begin{table}[thb]
\caption{SKIN-NET Parameters}
\label{tab:skin_net}

\centering
\begin{tabular}{|c|c|c|c|c|c|c|c|}
\hline
N & K & input\_size & $L_I$ & Number of nodes per layer & \begin{tabular}[c]{@{}c@{}}kernel \\      $(k_1 x k_2)$,\\$(k_3xk_4)$\end{tabular} & Trainable parameters & \begin{tabular}[c]{@{}c@{}}Runtime \\      (ms)\end{tabular} \\ \hline
1 & 2 & 2x4         & 5     & 32,64,64,32,24,16         & 4x4,2x2                                                                        & 24,261               & 0.3                                                          \\ \hline
2 & 2 & 3x4         & 5     & 32,64,64,32,24,16         & 4x4,3x2                                                                        & 25,517               & 0.33                                                         \\ \hline
3 & 2 & 4x4         & 5     & 32,64,64,32,24,16         & 4x4,3x3                                                                        & 27,401               & 0.41                                                         \\ \hline
4 & 2 & 5x4         & 5     & 32,64,64,32,24,16         & 4x4,4x4                                                                        & 31,797               & 0.47                                                         \\ \hline
5 & 2 & 6x4         & 5     & 32,64,64,32,24,16         & 4x4,5x5                                                                        & 37,449               & 0.59                                                         \\ \hline
\end{tabular}
\end{table}

\begin{table}[thb]
\caption{STU-SC-NET Parameters}
\label{tab:small_net}
\centering
\begin{tabular}{|c|c|c|c|c|c|c|c|}
\hline
N & K & input\_size & $L_C$ & Number of nodes per layer & \begin{tabular}[c]{@{}c@{}}kernel \\      $(c_1 x c_2)$\end{tabular} & Trainable parameters & \begin{tabular}[c]{@{}c@{}}Runtime \\      (ms)\end{tabular} \\ \hline
1 & 2 & 2x4         & 1     & 8,8,8,10                    & 2x2                                                                  & 833                  & 0.011                                                       \\ \hline
2 & 2 & 3x4         & 1     & 8,8,8,10                    & 2x2                                                                  & 923                  & 0.011                                                        \\ \hline
3 & 2 & 4x4         & 1     & 8,8,8,10                    & 2x2                                                                  & 1508                  & 0.012                                                       \\ \hline
4 & 2 & 5x4         & 1     & 8,8,8,10                    & 2x2                                                                  & 2507                & 0.013                                                        \\ \hline
5 & 2 & 6x4         & 1     & 8,8,8,10                    & 2x2                                                                  & 6,881               & 0.015                                                        \\ \hline
\end{tabular}
\end{table}

Tables~\ref{tab:sc_net}  and \ref{tab:skin_net}  present the fine-tuned hyperparameters for SC-NET and SKIN-NET, respectively. SKIN-NET possesses 7.35 times less parameters, making it advantageous for memory-constrained applications compared to SC-NET. However, SKIN-NET's parallel flows increase its runtime by three times compared to SC-NET. The runtime tradeoff between SC-NET and SKIN-NET can vary depending on the specific implementation of the algorithms.  In this particular case, the Pytorch implementation does not fully leverage parallel flows in SKIN-NET. Given implementations prioritizing parallel processing, SKIN-NET may exhibit lower runtimes \cite{Pal2019}. Note that the runtimes presented in these tables only consider the testing phase of the relay selection.

Figs.~\ref{fig:dicho_train} and \ref{fig:dicho_val} respectively present the training and validation losses for STU-SC-NET throughout the iterations of DASA given in Algorithm~\ref{alg:search_arch} in a 3-source-2-relay network for a validation cross-entropy loss lower than 1.5. In the first six iterations, DASA shrinks the architecture size. Since the validation loss  exceeds the determined threshold, 1.5, in the sixth iteration, DASA enlarges the architecture for the next step. In the final step, the architecture size reduces one last time and DASA stops. As anticipated, the training and validation losses deteriorate with the decrease in the number of trainable parameters. However, the reduction in the number of parameters implies a decrease in runtime, since the complexity of CNNs is commonly scaled by the number of parameters. DASA is designed to find the optimal balance between the loss value and parameter size. Ultimately, DASA converges to an architecture with 1508 trainable parameters in 8 iterations, achieving a validation cross-entropy loss lower than 1.5.

Table~\ref{tab:small_net} presents the hyperparameters for STU-SC-NET determined by DASA.  The comparison with Tables~\ref{tab:sc_net}  and \ref{tab:skin_net} indicates a significant reduction in the number of trainable parameters and runtimes provided by STU-SC-NET. STU-SC-NET provides up to 16 times lower runtime than SC-NET and up to 29 times less trainable parameters than SKIN-NET. Note that the runtimes presented in the table only consider the testing phase of the relay selection step. 

 \begin{figure*}
     \centering
    \subfloat[]{\includegraphics[width=.35\textwidth]{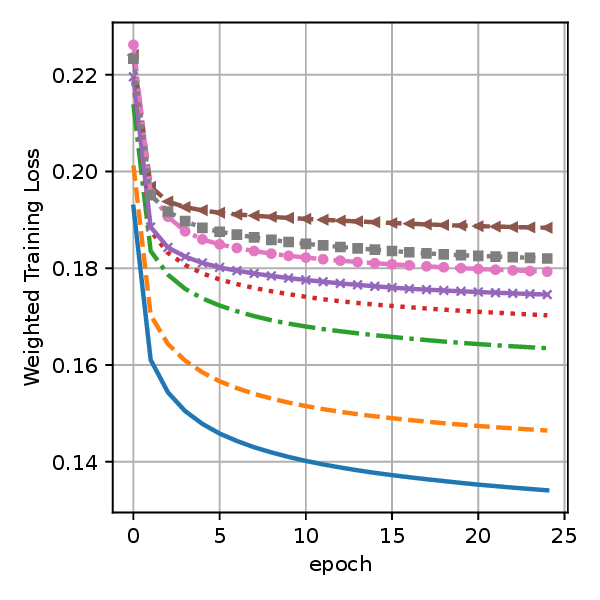}\label{fig:dicho_train}}
    \subfloat[]{\includegraphics[width=.6\textwidth]{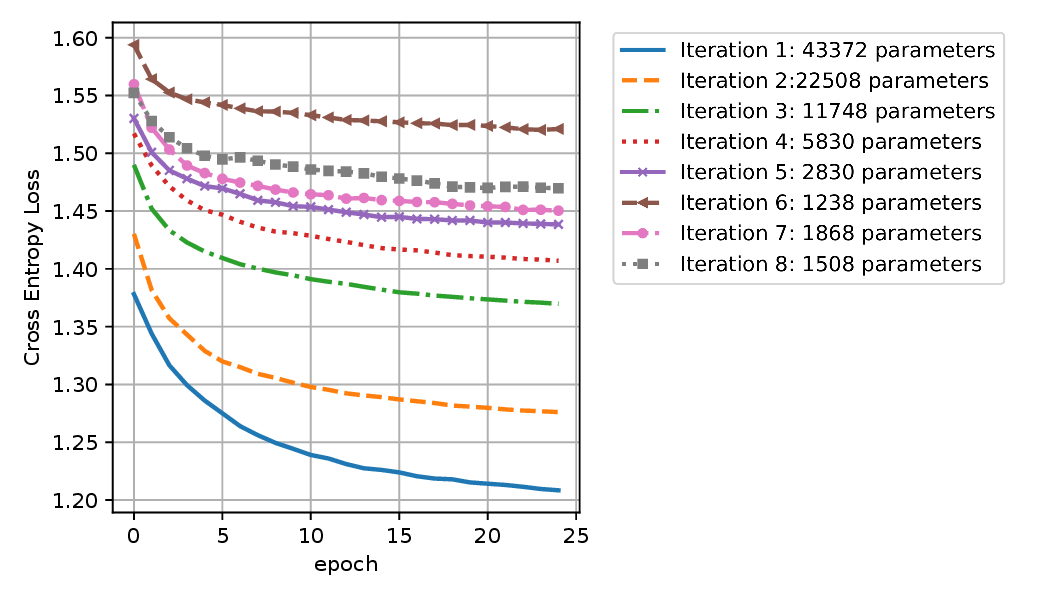}\label{fig:dicho_val}}
     \caption{a) Training and  b) Validation Losses throughout the iterations of DASA given in Algorithm~\ref{alg:search_arch} in a 3-source-2-relay network. }
     \label{fig:dicho-loss}
 \end{figure*}

\subsection{Benchmark Comparison}

This section compares the performances of the proposed deep learning architectures SC-NET, SKIN-NET, and STU-SC-NET, with the aforementioned benchmark algorithms.
 \begin{figure*}
     \centering
     \subfloat[]{\includegraphics[width=.4\textwidth]{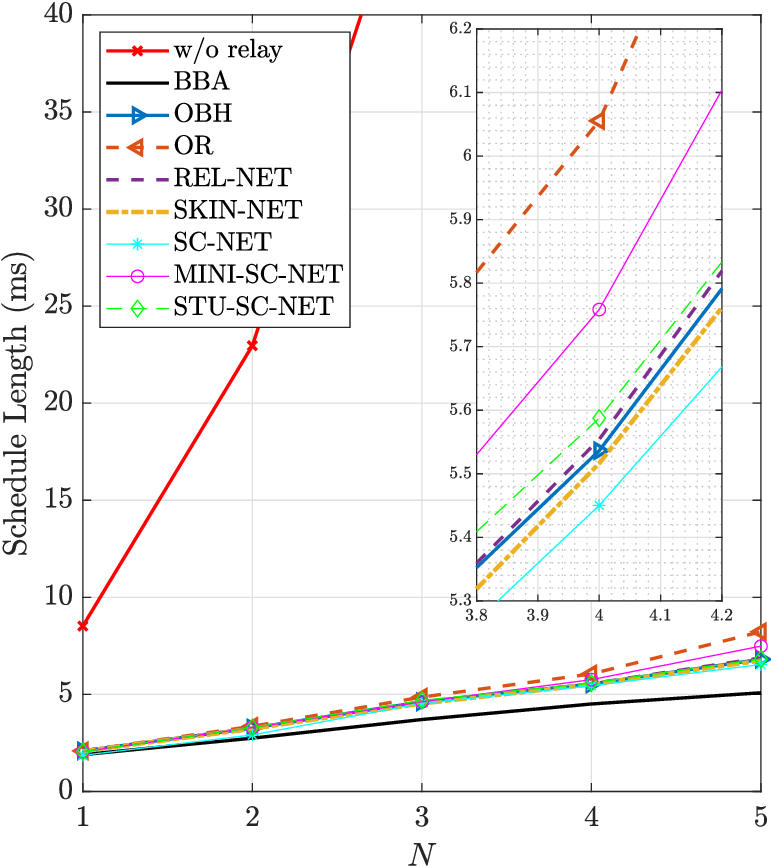}\label{fig:schedule_vs_sources}} 
       \subfloat[]{\includegraphics[width=.4\textwidth]{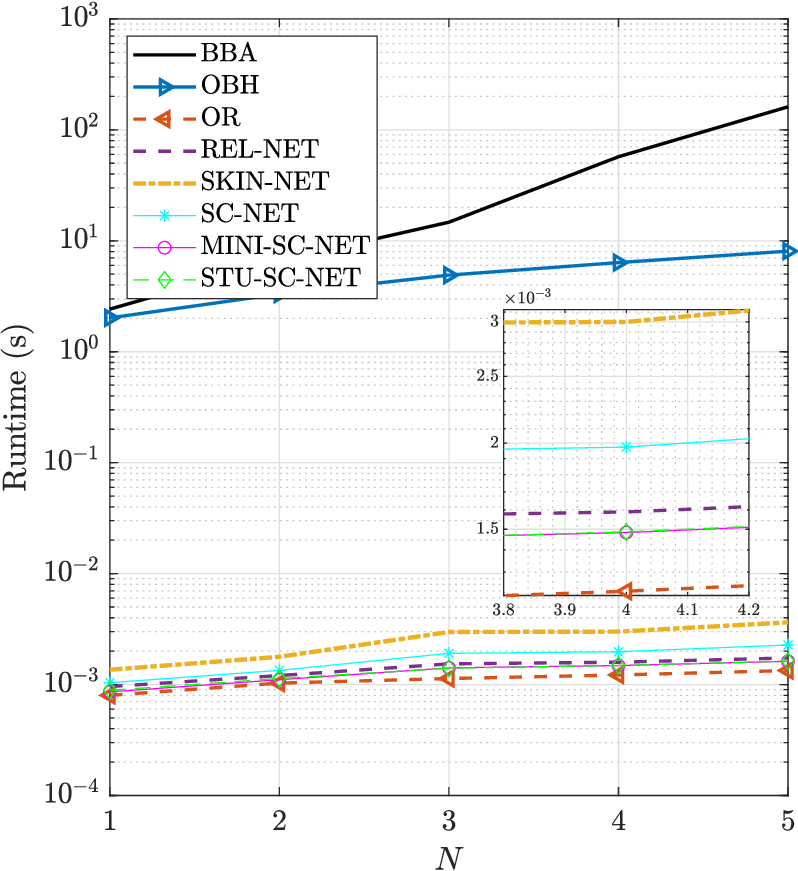}\label{fig:runtime}}
     \caption{a) Schedule length b) Runtime vs. the number of sources in a 2-relay network.}
     \label{fig:schedule}
 \end{figure*}

 Fig.~\ref{fig:schedule_vs_sources} shows the schedule length as the number of sources varies from 1 to 5 in a 2-relay network. Cooperating with relays significantly improves the schedule length, e.g., 95\% for $N=2$. Among the algorithms, BBA stands out as the optimal approach, consistently providing the minimum schedule length. SC-NET outperforms all other sub-optimal approaches, with SKIN-NET closely following. Notably, SKIN-NET achieves a comparable validation loss to SC-NET with fewer parameters, making it advantageous for memory-constrained scenarios. On the other hand, MINI-SC-NET is the worst-performing deep learning-based approach due to its smaller architecture. STU-SC-NET performs up to 9\% better than MINI-SC-NET, indicating the improvement provided by a teacher network during training.  The optimality gap of SC-NET, SKIN-NET, OBH, REL-NET, STU-SC-NET, MINI-SC-NET, and OR are 20\%, 22\% 23\%, 24\%, 27\%, and 34\%, in a 4-source-2-relay network, respectively.

  Fig.~\ref{fig:runtime} shows the runtimes of the proposed and benchmark algorithms as the number of sources varies from 1 to 5 in a 2-relay network. The runtime of MINI-SC-NET is omitted as it overlaps with STU-SC-NET due to the same size architecture. BBA and OBH have three orders of magnitude higher runtimes than all other approaches. OR has the lowest runtime, however, it obtains the highest schedule length. Due to its parallel architecture, SKIN-NET requires more runtime than other deep learning approaches. STU-SC-NET has the lowest runtime among deep learning approaches as it has the smallest architecture.  OBH, SKIN-NET, SC-NET, REL-NET, STU-SC-NET achieve significantly lower runtimes compared to BBA, with reductions up to 20, 44360, 70997, 92916, 99328 times, respectively. 


\section{Conclusion} \label{section:conc}

We study the joint relay selection, scheduling, and power control problem for multiple-source-multiple-relay WPCN with a non-linear EH model. The objective of the problem is to minimize the schedule length, with constraints imposed on data demand, energy causality, and maximum UL transmit power. The problem is non-convex MINLP and NP-hard. We propose a two-step solution strategy. Given the relay selection, a bisection searched-based algorithm solves the scheduling and power control problem. For the low-complexity solution to the remaining relay selection problem, we propose a CNN-based solution, called SC-NET.  We extend the CNN-based solution with inception blocks, called SKIN-NET, to decrease trainable parameter size without sacrificing accuracy for memory-constrained applications. We finally apply teacher-student learning, wherein a compact student network architecture, called STU-SC-NET, is determined through a dichotomous search-based algorithm and subsequently trained with guidance from the larger teacher network, SC-NET. Teacher-student learning supported by an architecture search enables us to decrease the runtime complexity further without sacrificing optimality. We demonstrate via numerical simulations that our proposed approaches exhibit better performance than the state-of-art iterative algorithms. Our proposed deep-learning models are advantageous in terms of runtime since their online prediction step consists of the multiplication and addition of weights and biases. SC-NET stands out for achieving the lowest optimality gap with a relatively low runtime. SKIN-NET can be preferred for memory-constrained applications due to its significantly smaller parameter size compared to SC-NET. Meanwhile, STU-SC-NET is a preferred option for achieving the lowest runtime while maintaining high accuracy.

 The future research direction is to reduce the effort on offline training of deep learning architectures as the training over large datasets consumes considerable energy and time. The reinforcement learning strategies can be applied to avoid generating labels with optimum algorithms; whereas training data points can be smartly selected by active learning strategies to reduce the required amount of data.
 
\bibliographystyle{IEEEtran}
\bibliography{ai_eh_relay,eh_relay,relay_bib}
\end{document}